\icmltitlerunning{Generalized RBF kernel for incomplete data}
\newtheorem{theorem}{Theorem}[section]
\newtheorem{observation}{Observation}[section]
\theoremstyle{remark}
\def\const{\mathrm{const}}
\def\R{\mathbb{R}}
\def\1{\mathds{1}}
\def\X{\!\!\!\!\!\!\! & }
\def\span{\mathrm{span}}
\def\det{\mathrm{det}}
\def\L2{L_2}
\def\our{\mbox{$\bf genRBF$}}
\newcommand\il[1]{\langle #1 \rangle}
\begin{document} 

\twocolumn[
\icmltitle{Generalized RBF kernel for incomplete data}

% It is OKAY to include author information, even for blind
% submissions: the style file will automatically remove it for you
% unless you've provided the [accepted] option to the icml2017
% package.

% list of affiliations. the first argument should be a (short)
% identifier you will use later to specify author affiliations
% Academic affiliations should list Department, University, City, Region, Country
% Industry affiliations should list Company, City, Region, Country

% you can specify symbols, otherwise they are numbered in order
% ideally, you should not use this facility. affiliations will be numbered
% in order of appearance and this is the preferred way.
%\icmlsetsymbol{equal}{*}

\begin{icmlauthorlist}
\icmlauthor{Marek \'Smieja}{goo}
\icmlauthor{Lukasz Struski}{goo}
\icmlauthor{Jacek Tabor}{goo}
\end{icmlauthorlist}

\icmlaffiliation{goo}{Jagiellonian University, Krak\'ow, Poland}

\icmlcorrespondingauthor{Marek \'Smieja}{marek.smieja@ii.uj.edu.pl}

% You may provide any keywords that you 
% find helpful for describing your paper; these are used to populate 
% the "keywords" metadata in the PDF but will not be shown in the document
\icmlkeywords{incomplete data, missing data, Gaussian kernel, RBF, SVM}

\vskip 0.3in
]

% this must go after the closing bracket ] following \twocolumn[ ...

% This command actually creates the footnote in the first column
% listing the affiliations and the copyright notice.
% The command takes one argument, which is text to display at the start of the footnote.
% The \icmlEqualContribution command is standard text for equal contribution.
% Remove it (just {}) if you do not need this facility.

\printAffiliationsAndNotice{}  % leave blank if no need to mention equal contribution
%\printAffiliationsAndNotice{\icmlEqualContribution} % otherwise use the standard text.
%\footnotetext{hi}

\begin{abstract} 
We construct \our{} kernel, which generalizes the classical Gaussian RBF kernel to the case of incomplete data. We model the uncertainty contained in missing attributes making use of data distribution and associate every point with a  conditional probability density function. This allows to embed incomplete data into the function space and to define a kernel between two missing data points based on scalar product in $\L2$. Experiments show that introduced kernel applied to SVM classifier gives better results than other state-of-the-art methods, especially in the case when large number of features is missing. Moreover, it is easy to implement and can be used together with any kernel approaches with no additional modifications.
\end{abstract}

%%%%%%%%%%%%%%%%%%%%%%%%%%%%
\section{Introduction} \label{se:introduction}

% --------------------------------------------------------------------------------------
% Cześć, wydaje mi się, ze nasz missing kernel ma wlasnosc ze
% $$
% K(missing_point,x)=normalizacja(E(K(X_{missing},x))).
% $$
% gdzie $E$ to jest wartosc oczekiwana, a $X_{missing}$ to zmienna losowa ktora
% generuje dane z rozkladu $missing_point$ 
% (czyli robienie imputation na podstawie duzej ilosci prob?).
% to troche wyjasnia dlaczego powinna dobrze pracowac nawet jak w uczacym nie bylo missing
% Mozna tez dorzucic/wymyslec jakis 
% komentarz/wyjasnienie dlaczego warto potencjalnie robic normalizacje
% to ma znaczenie przy bias
% --------------------------------------------------------------------------------------

Incomplete data analysis is an important part of data engineering and machine learning, since it appears naturally in many practical problems. In particular, in medical diagnosis, a doctor may be unable to complete the patient examination due to the deterioration of health status or lack of patient's compliance \cite{burke1997compliance}; in object detection, the system has to recognize partially hidden faces \cite{mahbub2016partial} or identify shapes from corrupted images \cite{berg2005shape}; in chemistry, the complete analysis of compounds requires high financial costs \cite{stahura2004virtual}. In consequence, the understanding and the appropriate representation of such data is of great practical  importance.

The choice of the method for analyzing incomplete data depends on the reasons why data are missing \cite{schafer1997analysis}. If missing entires are generated completely randomly (MCAR) or at least do not depend on missing values (MAR), then one can reliably estimate the probability distribution of incomplete data by the mixture model applying EM algorithm. Otherwise, we can model the missing data mechanism, which however leads to more complex solutions, or we can completely discard missing features, which drastically reduces available information (NMAR).

In this paper, we propose \our{}, a generalization of RBF (radial basis function) kernel to the case of incomplete data. To briefly explain our approach, let us recall that classical RBF can be constructed by embedding every point into the function space (with regularization by Gaussian kernel) and applying a standard scalar product in $\L2$. To generalize this process for incomplete data, we model the uncertainty on missing coordinates by restricting data density to absent attributes (for a simplicity, we use a single Gaussian as a density model). In consequence, a 
missing data point is represented as a regularization of a singular Gaussian density on a respective affine subspace of the data.
The illustration of the above process is presented in Figure \ref{fig:representation}.

\begin{figure*}[t]
	\centering
	\subfigure[Complete data point $x = (-1,-2)^T$ and missing data point $y = (?, 1)^T$ identified with an affine subspace.]{\includegraphics[height=2in]{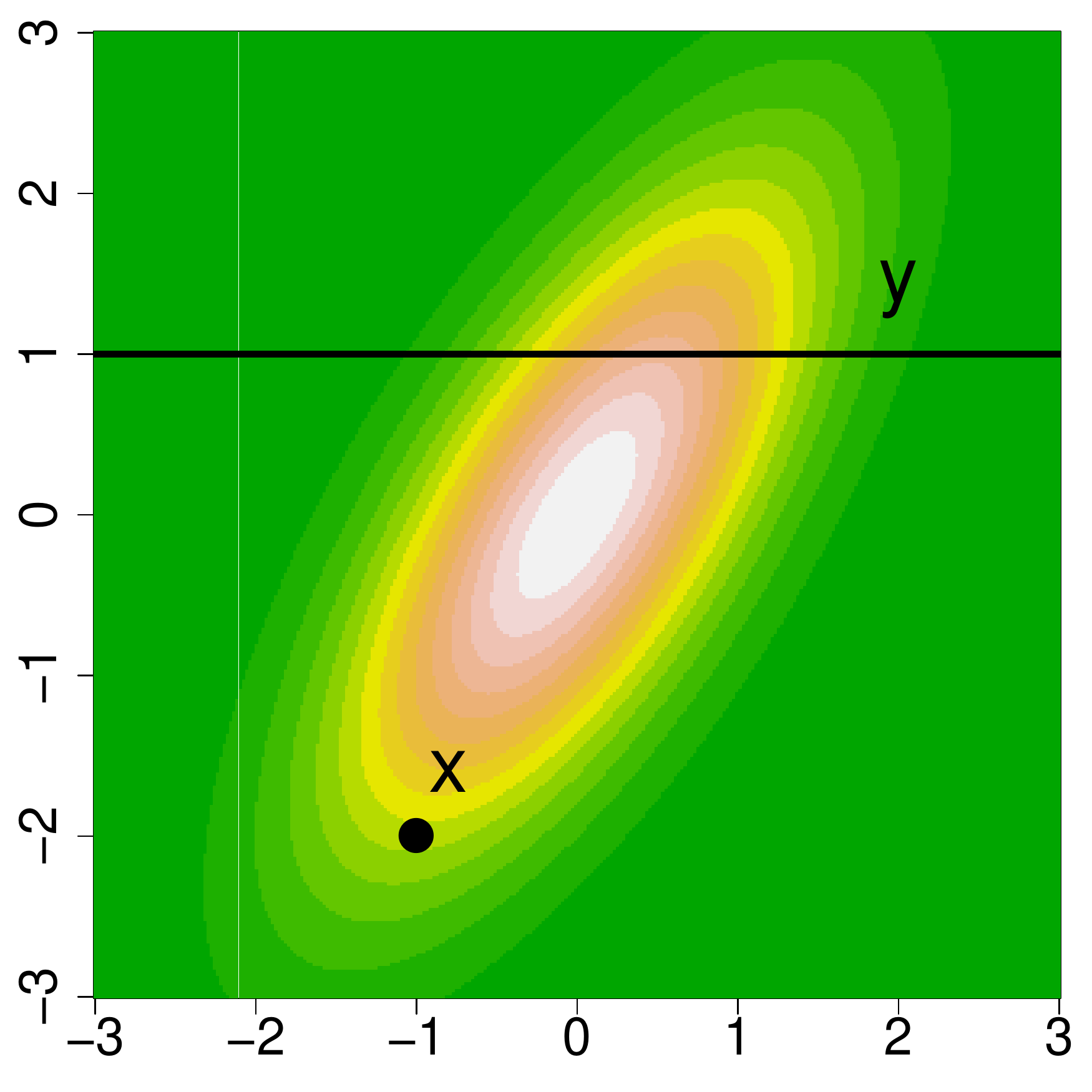}  \label{c1}} \quad
	\subfigure[Missing data point represented by degenerate density, which has a support restricted to missing attributes.]{\includegraphics[height=2in]{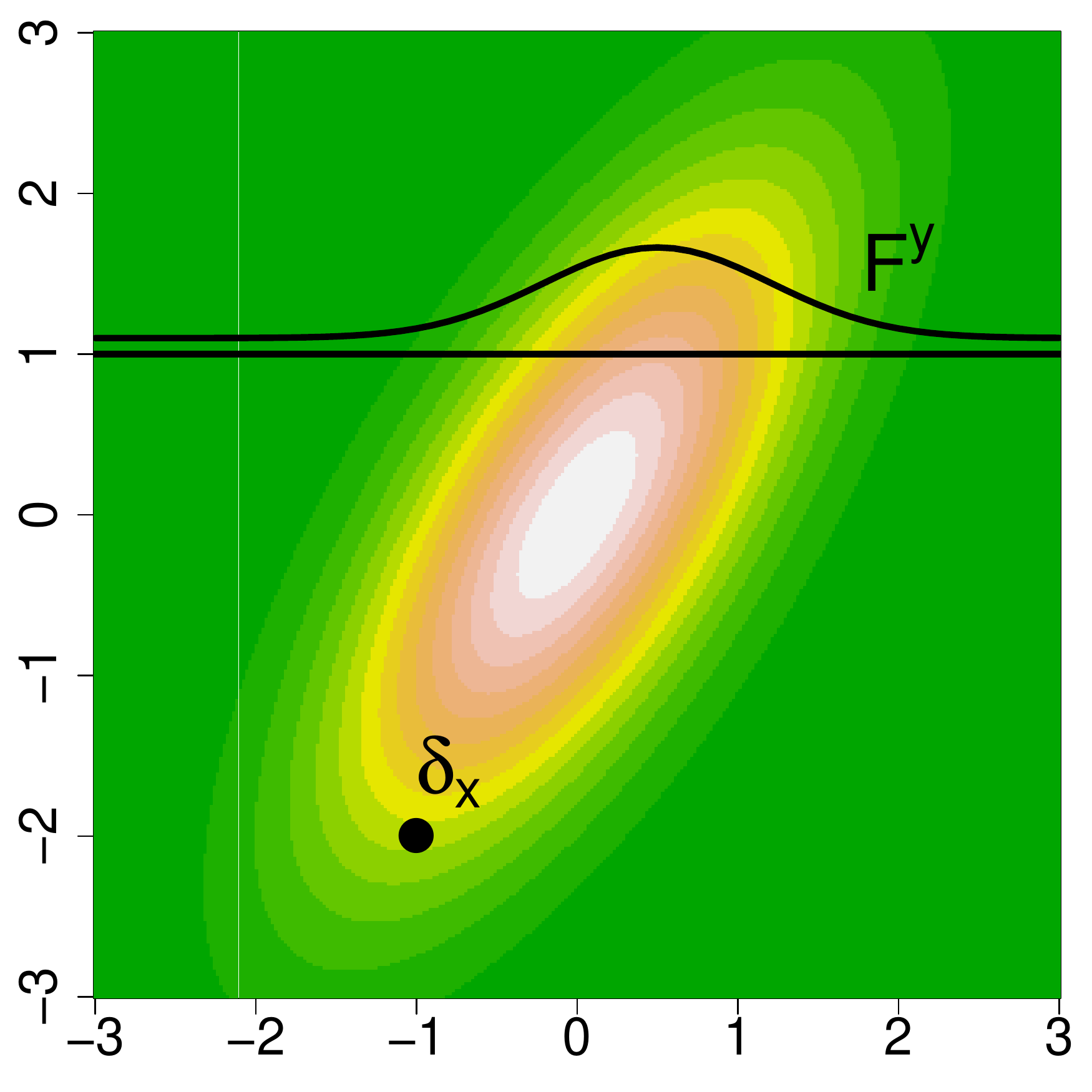}  \label{c2}} \quad
	\subfigure[Embedding of $x$ and $y$ into $\L2$ using the convolution (regularization) operator.]{\includegraphics[height=2in]{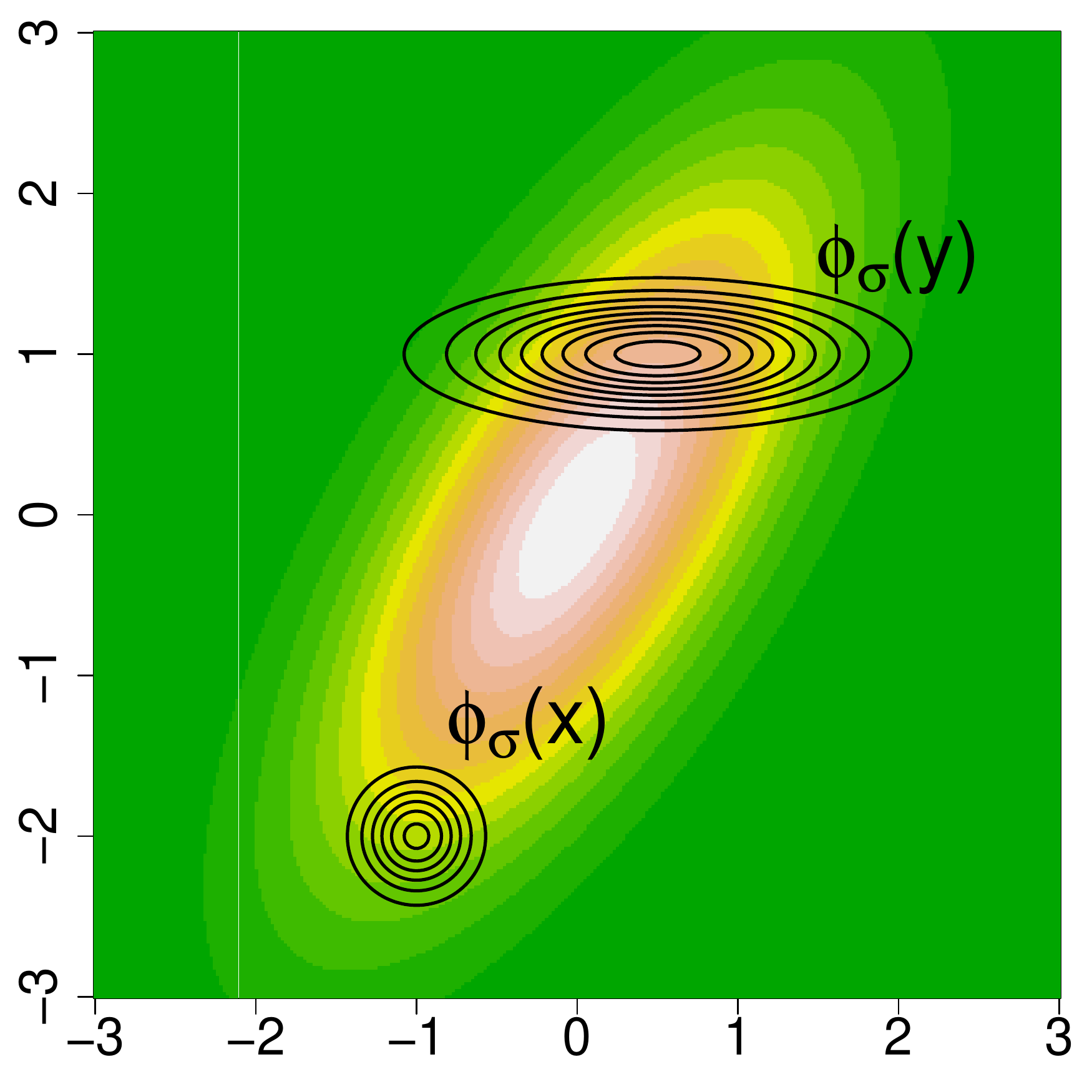} \label{c3}}
	\caption{Representation and embedding of complete and missing data points into $\L2$ space making use of Gaussian density estimated from data.}
	\label{fig:representation}
\end{figure*}

Main features of \our{} can be summarized as follows:
\begin{itemize}
\item  \our{} is easy to implement and can be used together with any kernel approach,
\item it does not perform any direct imputations, 
\item \our{} is effective, resistant to possible perturbations and robust to the number missing entries,
\item SVM classifier which uses \our{} obtains better results than existing 
state-of-the-art methods. 
\end{itemize}

\section{Related work}

The most common approach to learning from incomplete data is known as deterministic imputation \cite{mcknight2007missing}. In this two-step procedure, the missing features are filled first, and only then a standard classifier is applied to the complete data \cite{little2014statistical}. Although the imputation-based techniques are easy to use for practitioners, they lead to the loss of information which features were missing and do not take into account the reasons of missingness. To preserve the information of missing attributes, one can use an additional vector of binary flags, indicating which coordinates were missing.

The second popular group of methods aims at building a probabilistic model of incomplete data. If data are missing at random, then it is possible to apply EM algorithm to estimate a density of data by the mixture of parametric models \cite{ghahramani1994supervised, schafer1997analysis}. Consequently, this allows to generate the most probable values from obtained probability distribution for missing attributes (random imputation) \cite{mcknight2007missing} or to learn a decision function directly based on the distributional model. The second option was already investigated in the case of logistic regression \cite{williams2005incomplete}, kernel methods \cite{smola2005kernel, williams2005analytical} or by using second order cone programming \cite{shivaswamy2006second}. One can also estimate the parameters of the probability model and the classifier jointly, which was considered in \cite{dick2008learning, liao2007quadratically}. As it was mentioned above, the limitation of these techniques is the assumption about the process of missing data generation. If missing data depends on the unobserved features then there is no guarantee to get a reasonable estimation of data density.

There is also a group of methods which do not make any assumptions about the missing data mechanism and make a prediction from incomplete data directly. In  \cite{chechik2008max} a modified SVM classifier is trained by scaling the margin according to observed features only. The alternative approaches to learning a linear classifier, which avoid features deletion or imputation, are presented in \cite{dekel2010learning, globerson2006nightmare}. In \cite{grangier2010feature} the embedding mapping of feature-value pairs is constructed together with a classification objective function. Finally, the authors of \cite{hazan2015classification} design an algorithm for kernel classification that performs comparably to the classifier which have an access to complete data, under low-rank assumption (every vector can be reconstructed from the observed attributes).

\section{Generalized RBF}

In this section we present the construction of \our{} kernel function. We begin with the description of incomplete data by affine subspaces. Next, we show how to use the information contained in data distribution to model the uncertainty on missing coordinates and, in consequence, how to represent incomplete data points by probability measures. This identification allows to apply the reasoning commonly used in classical RBF kernels and to define an analogue formula for a kernel function in the case of incomplete data.
The visual representation of the idea behind \our{} kernel is given in Figure \ref{fig:kernel}.

\begin{figure*}[t]
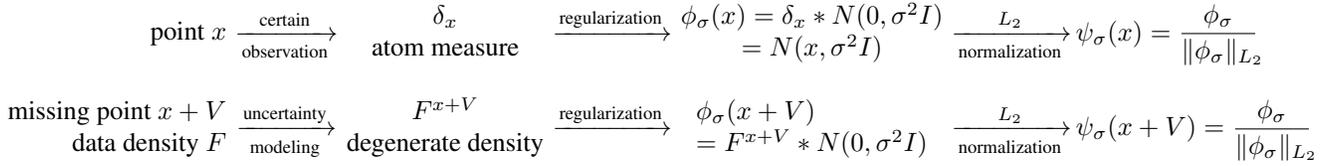

	\centering
$$
\!\!\!\!
\begin{array}{rcccccl}
 \begin{array}{c} \text{point }x \end{array} \X \xrightarrow[\text{observation}]{\text{certain}} \X \begin{array}{c} \delta_x \\ \text{atom measure} \end{array} \X \xrightarrow[]{\text{regularization}} \X \begin{array}{c} \phi_{\sigma}(x) = \delta_x * N(0, \sigma^2 I) \\ = N(x, \sigma^2 I)\end{array} 
 \X \xrightarrow[\text{normalization}]{\L2} 
 \X \,\, \psi_\sigma(x)  = \displaystyle{\frac{\phi_\sigma}{\|\phi_\sigma\|_{\L2}\!\!}}
\\[4.0ex]
 \begin{array}{r}\text{\!\!\!\!\!missing point } x+V  \\ \text{data density }F\end{array} \X \xrightarrow[\text{modeling}]{\text{uncertainty}} \X \begin{array}{c} F^{x+V} \\ \text{degenerate density} \end{array} \X \xrightarrow[]{\text{regularization}} \X \begin{array}{l} \phi_{\sigma}(x+V) \\ = F^{x+V} * N(0, \sigma^2 I)
\end{array} 
 \X \xrightarrow[\text{normalization}]{\L2} 
 \X \,\, \psi_\sigma(x+V) = \displaystyle{\frac{\phi_\sigma}{\|\phi_\sigma\|_{\L2}\!\!}}
\end{array}
$$
	\caption{Diagram showing the comparison of constructions of classical RBF kernel and \our{}. Observe that the last step ($\L2$ normalization) is needed to ensure that the
	resulting kernel function $K_\sigma(x,y)=\langle \psi_\sigma(x),\psi_\sigma(y)\rangle_{\L2}$ satisfies the condition $K_\sigma(x,x)=1$.}
	\label{fig:kernel}
\end{figure*}

\subsection{Subspace representation}

An incomplete data point in $\R^N$ is typically understood as a pair $(x, J_x)$, where $x \in \R^N$ and $J_x \subset \{1,\ldots,N\}$ is a set of indices of missing atributes. We can associate a missing data point $(x,J_x)$ with an affine subspace $x + \span(e_j)_{j \in J_x}$, where $(e_j)_j$ is the canonical base of $\R^N$. Let us observe that $x+ \span(e_j)_{j \in J_x}$ is a set of all $N$-dimensional vectors, which coincide with $x$ on the coordinates different from $J_x$. 

In practice, data are often transformed by linear mappings (e.g. whitening) in a preprocessing stage. For this purpose, we generalize the above representation to arbitrary affine subspaces, which do not have to be generated over canonical bases. Therefore, we assume that the set of incomplete data consists of affine subspaces of the form $ x + V$, where $x \in \R^N$ is arbitrary and $V$ is a linear subspace of $\R^N$.

If $f:\R^N \ni w \to Aw+b$ is a an affine map, then we can transform a missing data point $x+V$ into another missing data point by the formula:
$$
f(x+V)=\{Aw+b:w \in x+V\}.
$$
The linear part of $f(x+V)$ is given by 
$$
f(x+V)-f(x)=AV.
$$
One can easily compute and represent $AV$ if the orthonormal base $v_1,\ldots,v_n$ of $V$ is given, we simply orthonormalize the sequence $Av_1,\ldots,Av_n$. 

For example, given the whitening operator:
$$
\mathrm{Whitening}(x) = \Sigma^{-1/2}(x-m),
$$
where $\Sigma$ denotes the covariance and $m$ is the mean vector, a missing data point $x+V$ is transformed to
$$
\mathrm{Whitening}(x+V) = \Sigma^{-1/2}(x-m) + \Sigma^{-1/2} V.
$$
The linear part $V$ is mapped to $\Sigma^{-1/2} V$, which has to be shifted by a vector $\Sigma^{-1/2}(x-m)$ (classical whitening operator applied to $x$).

\subsection{Missing data point as a probability density}

Subspace representation of incomplete data gives no information where the point is localized on the affine subspace. To add this information and to reduce the uncertainty connected with missing attributes we need the knowledge of the distribution of data. It allows to identify a missing data point with a degenerate density with support restricted to the affine subspace.

To realize this goal in practice, we need to perform a density estimation on the incomplete data set. It is well-known that it is possible to apply the EM algorithm to obtain the estimation by a mixture of parametric models if data satisfy missing at random assumption (MAR). Although in more general case the calculated density might be unreliable, we only use it to reduce the uncertainty on absent attributes (for observable attributes this density is not used). Therefore, we assume that some estimation $F$ of data density is given. 

A complete data point $x$ (with no missing coordinates) can be identified with atom Dirac measure $\delta_x$ (a measure that takes value $x$ with probability $1$), because there is no uncertainty connected with this example. If we have an incomplete data point $x+V$ then the uncertainty is connected with its missing part, which can be modeled by a restriction  of density $F$ to the affine subspace $x+V$, which is denoted by $F_{x+V}$ (conditional density). Let us recall that if $v = [v_1,\ldots,v_n]$ is an orthonormal base of $V$ then
$$
F_{x+V}(x + v \alpha) = \frac{F(x+ v \alpha)}{\int_{x+V} F(y)dy } \text{ for } \alpha \in \R^n.
$$
This conditional density is defined for points contained in a subspace $x+V$. Since we work in $N$ dimensional space, it is convenient to extend this conditional density to the degenerate density in original $\R^N$ space. In other words, we have to form a density $F^{x+V}$ from the conditional density $F_{x+V}$ by:
$$
F^{x+V}(y) = \left\{\begin{array}{ll}
0, & y \notin x+V,\\
F_{x+V}(y), & y \in x+V,
\end{array}
\right. \text{ for } y \in \R^N.
$$
We use a density $F^{x+V}$ to represent missing data points in $\R^N$.

For a simplicity and clarity of presentation, we restrict our attention to the case of Gaussian densities and assume  that $F=N(m,\Sigma)$, given by
$$
N(m,\Sigma)(x)=\frac{1}{(2\pi)^{N/2} \det^{1/2} \Sigma}\exp(-\tfrac{1}{2}\|x-m\|^2_\Sigma),
$$
is a Gaussian estimation of the distribution on incomplete data set, where $\|y\|^2_\Sigma = y^T\Sigma^{-1}y$ denotes the square of Mahalanobis norm of $y \in \R^N$. Although a single Gaussian density might not be enough to fully reflect a complex structure of data, it is robust to the number of missing attributes, can be easily computed in practice and does not require so strong assumptions on missing data mechanism. 

Below, we present how to obtain the conditional density $F_{x+V} = N(m_V, \Sigma_V)$ and corresponding density $F^{x+V} = N(m^V, \Sigma^V)$ in the original space from a data space distribution $F = N(m,\Sigma)$. 
\begin{observation} \label{obs:1}
	Let $F_{x+V} = N(m_V,\Sigma_V)$ be a density in the affine subspace $x+V$. If  $v=[v_1,\ldots,v_n]$ is an orthonormal base of $V$ then the corresponding density in the original $\R^N$ space equals $F^{x+V} = N(m^V, \Sigma^V)$, where
	$$
	m^V = x+vm_V \text{ and } \Sigma^V = v\Sigma_V v^T.
	$$
\end{observation}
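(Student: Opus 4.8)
The plan is to realize $F^{x+V}$ as the pushforward of the conditional density $F_{x+V}$ under the affine map $\iota:\R^n \ni \alpha \mapsto x + v\alpha \in \R^N$, and to exploit the fact that, because $v=[v_1,\ldots,v_n]$ is orthonormal, $\iota$ is an isometry onto $x+V$. First I would record the algebraic consequences of orthonormality that drive the whole argument: $v^Tv = I_n$, so $\iota$ preserves distances and therefore carries Lebesgue measure on $\R^n$ onto the $n$-dimensional surface measure on $x+V$ with unit Jacobian; and the identities $(v\Sigma_V v^T)^+ = v\Sigma_V^{-1}v^T$ and $\det_+(v\Sigma_V v^T)=\det \Sigma_V$ for the Moore--Penrose pseudoinverse $(\cdot)^+$ and the pseudo-determinant $\det_+$ (the product of nonzero eigenvalues). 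Both follow directly from $v^Tv=I_n$: the first by checking the defining relations, the second since $v\Sigma_V v^T$ is, on its range $V$, conjugate to $\Sigma_V$ via the isometry $v$.

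For the mean and covariance I would use that an affine image of a Gaussian vector is again Gaussian and is determined by its first two moments. If $\alpha \sim N(m_V, \Sigma_V)$, then $Y=\iota(\alpha)=x+v\alpha$ has mean $\mathbb{E}Y = x + v m_V = m^V$ and covariance $\cov(Y) = v\,\cov(\alpha)\,v^T = v\Sigma_V v^T = \Sigma^V$, which is exactly the claimed pair $(m^V,\Sigma^V)$. Since $\Sigma^V$ has rank $n$, the law of $Y$ is concentrated on $x+V$, matching the support of $F^{x+V}$.

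To make the identification of densities fully explicit, I would then verify directly that the degenerate density of $N(m^V,\Sigma^V)$ — written with respect to the surface measure on $x+V$ using $(\Sigma^V)^+$ and $\det_+\Sigma^V$ in place of $(\Sigma^V)^{-1}$ and $\det\Sigma^V$ — agrees with $F_{x+V}$ after the substitution $y = x+v\alpha$. Writing $y-m^V=v(\alpha-m_V)$ and inserting $(v\Sigma_V v^T)^+=v\Sigma_V^{-1}v^T$ together with $v^Tv=I_n$, the quadratic form in the exponent collapses to $(\alpha-m_V)^T\Sigma_V^{-1}(\alpha-m_V)$, while $\det_+\Sigma^V=\det\Sigma_V$ makes the normalizing constants coincide; the unit Jacobian guarantees no extra factor appears, so the two densities are identical.

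The moment computations and the change of variables are mechanical; the genuine subtlety lies entirely in the degeneracy of $\Sigma^V$. Because $\Sigma^V$ is singular, $N(m^V,\Sigma^V)$ has no density with respect to Lebesgue measure on $\R^N$, so one must interpret it as a measure concentrated on the affine subspace $x+V$ and compare densities against the correct reference (the $n$-dimensional surface measure), replacing the inverse and determinant by their pseudo-counterparts. Establishing the two identities $(v\Sigma_V v^T)^+ = v\Sigma_V^{-1}v^T$ and $\det_+(v\Sigma_V v^T)=\det \Sigma_V$ from $v^Tv=I_n$ is therefore the crux, after which the statement follows.
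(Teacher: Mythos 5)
Your proposal is correct and its core step is exactly the paper's argument: push forward $N(m_V,\Sigma_V)$ under the affine isometry $\alpha\mapsto x+v\alpha$ and read off the transformed mean $x+vm_V$ and covariance $v\Sigma_V v^T$. The additional material on the pseudo-inverse, pseudo-determinant, and surface measure is a sound (and correct) rigorization of the degenerate-density identification that the paper simply leaves implicit, not a different route.
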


\begin{proof}
	If a random vector $Y$ has a mean $m_Y$ and a covariance $\Sigma_Y$, then $\Phi(Y) = A Y+b$ has the mean $A m_Y+b$ and the covariance $A\Sigma_Y A^T$. We apply this fact to the map $F^{x+V}:\R^n \ni \alpha=[\alpha_1,\ldots,\alpha_n]^T \to v\alpha+x \in \R^N$, which completes the proof.
\end{proof}
Observe that $F^{x+V} = N(m^V, \Sigma^V)$ given above is a degenerate density in $\R^N$ iff $n < N$, i.e. the covariance matrix $\Sigma^V$ is singular (invertible).

Now, we discuss the inverse problem:
\begin{observation} \label{obs:2}
	Let $F=N(m,\Sigma)$ be a normal density in $\R^N$. We assume that $x+V$ is an affine subspace of $\R^N$ and $v=[v_1,\ldots,v_n]$ is an orthonormal base of $V$. Then, the conditional density $F_{x+V}$ in the space $x+V$ in the base given by $v$ equals $N(m_V,\Sigma_V)$, where
	$$
	\Sigma_V=(v^T\Sigma^{-1}v)^{-1} \text{ and }m_V=\Sigma_V[v^T\Sigma^{-1}(m-x)].
	$$
\end{observation}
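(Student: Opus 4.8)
The plan is to start from the explicit formula for the conditional density recalled just before the statement, namely
$$
F_{x+V}(x + v\alpha) = \frac{F(x+v\alpha)}{\int_{x+V} F(y)\,dy} \for \alpha \in \R^n,
$$
and to show that, viewed as a function of the coordinate $\alpha$, the numerator is proportional to a Gaussian density in $\alpha$ whose covariance and mean are exactly $\Sigma_V$ and $m_V$. Since the denominator is independent of $\alpha$, it serves only as the normalizing constant, so pinning down the Gaussian shape of the numerator is enough; I would not need to evaluate the normalizing integral directly.

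First I would substitute $y = x + v\alpha$ into the exponent of $F=N(m,\Sigma)$ and expand the Mahalanobis form. Writing the shift as $m-x$, we have
$$
\|x + v\alpha - m\|_\Sigma^2 = (v\alpha - (m-x))^T \Sigma^{-1} (v\alpha - (m-x)),
$$
which expands into $\alpha^T (v^T \Sigma^{-1} v)\alpha - 2\alpha^T v^T \Sigma^{-1}(m-x)$ plus a term $(m-x)^T\Sigma^{-1}(m-x)$ that does not depend on $\alpha$. This last term, together with the leading normalization of $F$ and the integral in the denominator, is absorbed into a single overall constant.

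Next I would complete the square in $\alpha$. Setting $A = v^T \Sigma^{-1} v$ and $c = v^T \Sigma^{-1}(m-x)$, the $\alpha$-dependent part of the exponent is $-\tfrac12(\alpha^T A\alpha - 2\alpha^T c)$, which equals $-\tfrac12(\alpha - A^{-1}c)^T A (\alpha - A^{-1}c)$ up to an additive constant. Reading off the quadratic form identifies the normalized density as $N(A^{-1}c, A^{-1})$, that is $\Sigma_V = (v^T\Sigma^{-1}v)^{-1}$ and $m_V = \Sigma_V [v^T \Sigma^{-1}(m-x)]$, as claimed.

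The one point requiring care is the invertibility of $A = v^T \Sigma^{-1} v$, which I would justify by noting that $\Sigma^{-1}$ is symmetric positive definite and the columns of $v$ are linearly independent (being an orthonormal basis of $V$), so $A$ is symmetric positive definite and hence invertible; this simultaneously guarantees that the completed-square expression is a genuine nondegenerate Gaussian on $\R^n$ rather than a formal manipulation. I expect the algebraic expansion and the bookkeeping of which terms are constant in $\alpha$ to be the main obstacle, though it is routine rather than conceptual.
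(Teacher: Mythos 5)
Your proposal is correct and follows essentially the same route as the paper's own proof: substitute $w = x + v\alpha$ into the quadratic form of the Gaussian exponent, expand, and complete the square in $\alpha$ to read off $\Sigma_V=(v^T\Sigma^{-1}v)^{-1}$ and $m_V=\Sigma_V[v^T\Sigma^{-1}(m-x)]$. Your extra remark justifying the invertibility of $v^T\Sigma^{-1}v$ is a small addition the paper leaves implicit, but it does not change the argument.
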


\begin{proof}
	Let us recall that the formula of normal density can be written as:
	$$
	w \to Z \cdot \exp(-\tfrac{1}{2}(w-m)^T\Sigma^{-1}(w-m)),
	$$
	where $Z$ is a normalization factor. Now, restricting the quadratic function $w \to (w-m)^T\Sigma^{-1}(w-m)$ to the space $x+V$ by putting $w=x+v\alpha$ we get
	$$
	\begin{array}{l}
	\alpha \to (x+v\alpha-m)^T\Sigma^{-1}(x+v\alpha-m) \\[1ex]
	=\alpha^T (v^T\Sigma^{-1}v)\alpha-2[v^T\Sigma^{-1}(m-x)]^T\alpha+const.
	\end{array}
	$$
	Finally, by the canonical form of the quadratic function\footnote{Recall the formula $\alpha^TA\alpha+b^T\alpha+c$, for symmetric $A$, can be rewritten as $(\alpha-\alpha_0)^TA(\alpha-\alpha_0)+\const$, for $\alpha_0=-\frac{1}{2}
		A^{-1}b$.} we get that this mapping equals
	$$
	\alpha \to (\alpha-m_V)^T\Sigma_V^{-1}(\alpha-m_V)+const,
	$$
	where
	$$
	\Sigma_V=(v^T\Sigma^{-1}v)^{-1} \text{ and }m_V=\Sigma_V[v^T\Sigma^{-1}(m-x)].
	$$
\end{proof}

Taking the above two observations together, we can calculate both densities from the original density $F$. In consequence, we represent a missing data point $x+V$ by a degenerate Gaussian density $N(m^V,\Sigma^V)$. As it was mentioned, this identification only influences absent attributes and has no effects on observable features.

\subsection{Kernel construction}

To define a scalar product (kernel function) on incomplete data, we will adapt the reasoning behind classical RBF kernels to the case of probabilistic representations introduced in previous subsection. Let us observe that the construction of classical RBF kernel for a complete data can be decomposed into the following steps:
\begin{itemize}
\item We map every point $x \in \R^N$ to Dirac measure $\delta_x$.
\item Next, we embed it into $\L2$ space by taking the convolution (regularization)
with $N(0,\sigma^2 I)$, where $\sigma$ is a fixed paramter: 
\begin{equation} \label{eq:classEmbed}
\phi_\sigma(x) = \delta_x * N(0,\sigma^2 I) = N(x, \sigma^2 I),
\end{equation}
\item Then, we apply the $\L2$ normalization
$$
\psi_\sigma(x)=\frac{\phi_\sigma(x)}{\|\phi_\sigma(x)\|_{\L2}\!\!\!}\,.
$$
\item Finally, we apply the scalar product in $\L2$ space between embeddings 
to define the kernel function $K_\sigma$
$$
K_\sigma(x, y) = \il{\psi_\sigma(x), \psi_\sigma(y)}_{\L2}.
$$
Due to the normalization, we have $K_\sigma(x,x) = 1$.
\end{itemize}
If not stated otherwise, $\| \cdot \|$ and $\il{\cdot, \cdot}$ will denote classical norm and scalar product in $\L2$ space, respectively.

To perform an analogue procedure in the case of missing data identified with Gaussian densities, let us introduce basic notations. We recall, that the standard scalar product in $\L2$ space is given by
$$
\il{F,G}=\int F(x)G(x)dx, \text{ for } F,G \in \L2.
$$
In the case of Gaussian densities, the above scalar product can be easily computed by \cite{petersen2008matrix}:
\begin{equation} \label{eq:scalarProduct}
	\il{N(m_1,\Sigma_1),N(m_2,\Sigma_2)}=N(m_1-m_2,\Sigma_1+\Sigma_2)(0),
\end{equation}
where $N(m_i,\Sigma_i)$ are non-degenerate Gaussians. 

We also need the notion of convolution, which for densities $F,G \in \L2$ is defined by
$$
(F * G)(y)=\int F(x-y) G(y) dx.
$$
If $F$ is a measure with a mean $m_F$ and a covariance $\Sigma_F$, then the convolution $F * N(0,\sigma^2 I)$, where $I$ is an identity matrix and $\sigma > 0$, is a measure with a mean $m_F$ and a covariance $\Sigma_F+\sigma^2 I$. The convolution of normal densities is a normal density and,
$$
N(m,\Sigma) * N(0,\sigma^2 I)=N(m,\Sigma+\sigma^2 I).
$$
The above formula also holds for degenerate normal densities. In consequence, this operator works as a regularization and allows to transform a degenerate density into a non-degenerate one.

Let us now calculate the normalized embedding of missing data point $x+V$ into $\L2$ space. For a fixed $\sigma >0$, we have 
\begin{equation} \label{eq:conv}
\begin{array}{ll}
	\phi_\sigma(x+V) & = N(m^V,\Sigma^V) *N(0,\sigma^2 I)\\[1ex]
	&=N(m^V,\Sigma^V+\sigma^2 I),
	\end{array}
\end{equation}
where $m^V, \Sigma^V$ follow from Observations \ref{obs:1} and \ref{obs:2}. Since we are interested in normalized embedding, we put:
$$
\psi_\sigma(x+V) = \frac{\phi_\sigma(x+V)}{\|\phi_\sigma(x+V)\|}.
$$

To define a kernel function on incomplete data, we simply calculate the scalar product in $\L2$ space between embeddings of missing data points. More precisely, for two missing data points $x+V$ and $y+W$, we put:
\begin{multline}\label{eq:scalarProb}
%\begin{array}{r}
	K_\sigma(x+V,y+W) = \il{\psi_\sigma(x+V) ,\psi_\sigma(y+W) } \\
	 = \displaystyle \frac{\il{N(m^V,\Sigma^V + \sigma^2 I),N(m^W,\Sigma^W+ \sigma^2 I)}}{\|N(m^V,\Sigma^V + \sigma^2 I)\| \cdot \| N(m^W,\Sigma^W + \sigma^2 I)\|}.
%	\end{array}
\end{multline} 
where $\sigma > 0$ is fixed.

\begin{table}[t]
    \caption{Summary of data sets from UCI repository.}\label{tab:uci-data}
    \centering
    \footnotesize
    \begin{tabular}{l c c c }
        \hline
        \textbf{Data set} & \textbf{\#Instances} & \textbf{\#Attributes} & \textbf{Classes ratio}  \\ \hline 
        \textit{Australian} & 690 & 14 & 0.56\\ 
        \textit{Bank} & 1372 & 5 & 0.56 \\ 
        \textit{Breast cancer} & 699 & 8 & 0.66 \\ 
        \textit{Crashes} & 540 & 18 & 0.91  \\  
        \textit{Heart} & 270 & 13 & 0.56 \\ 
        \textit{Ionosphere} & 351 & 34 & 0.64 \\ 
        \textit{Liver disorders} & 345 & 7 & 0.58 \\ 
        \textit{Pima} & 768 & 8 & 0.65  \\ \hline
    \end{tabular}
\end{table}

\begin{figure*}[t]
	\centering
	\includegraphics[width=0.5\textwidth]{./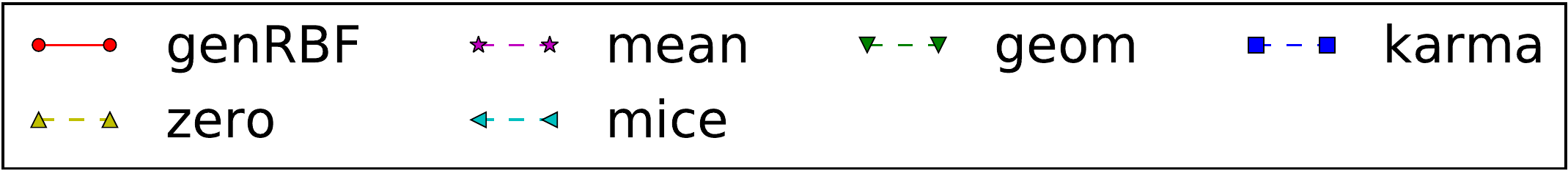}\\
	\includegraphics[width=0.24\textwidth]{./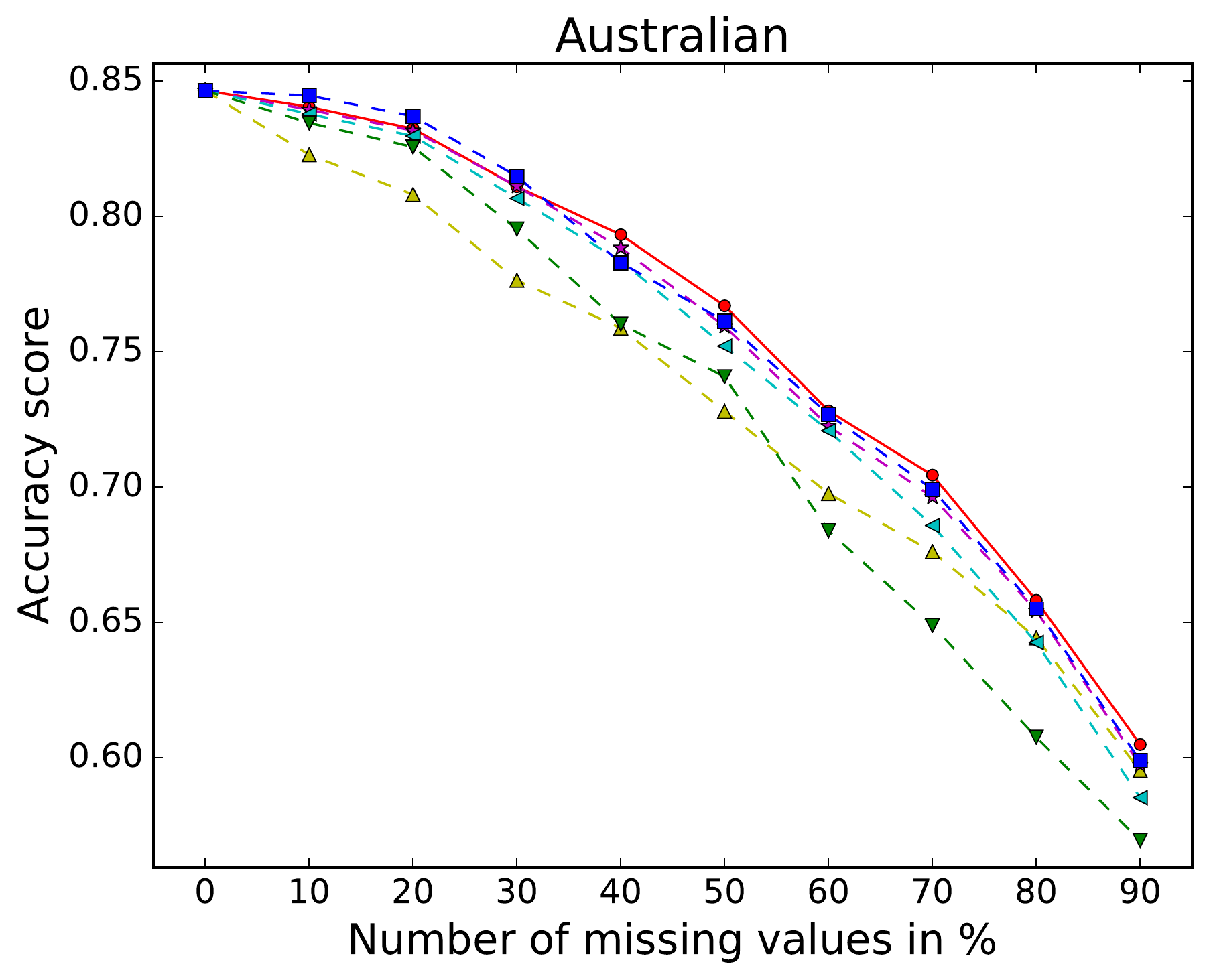}
	\includegraphics[width=0.24\textwidth]{./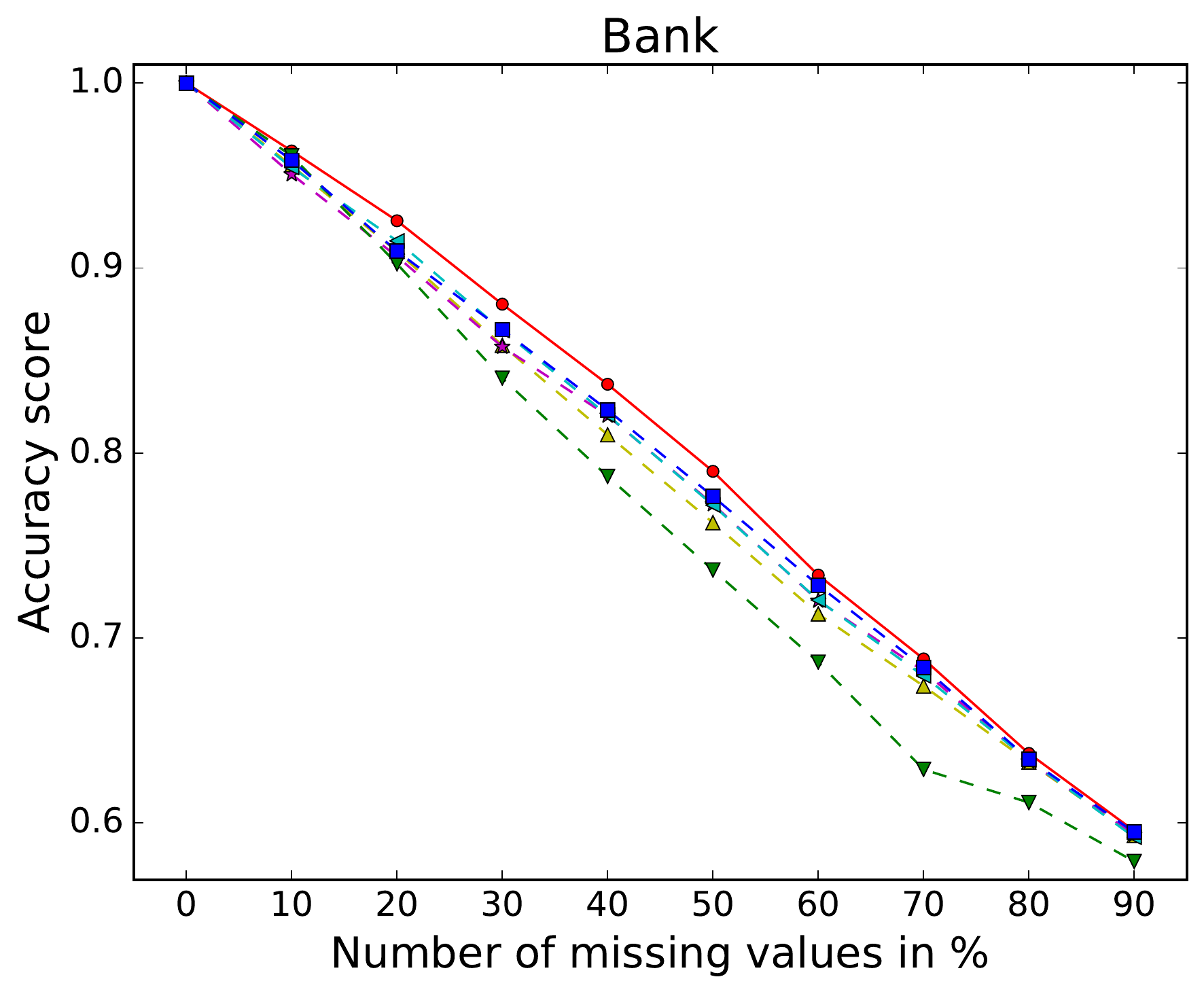}
	\includegraphics[width=0.24\textwidth]{./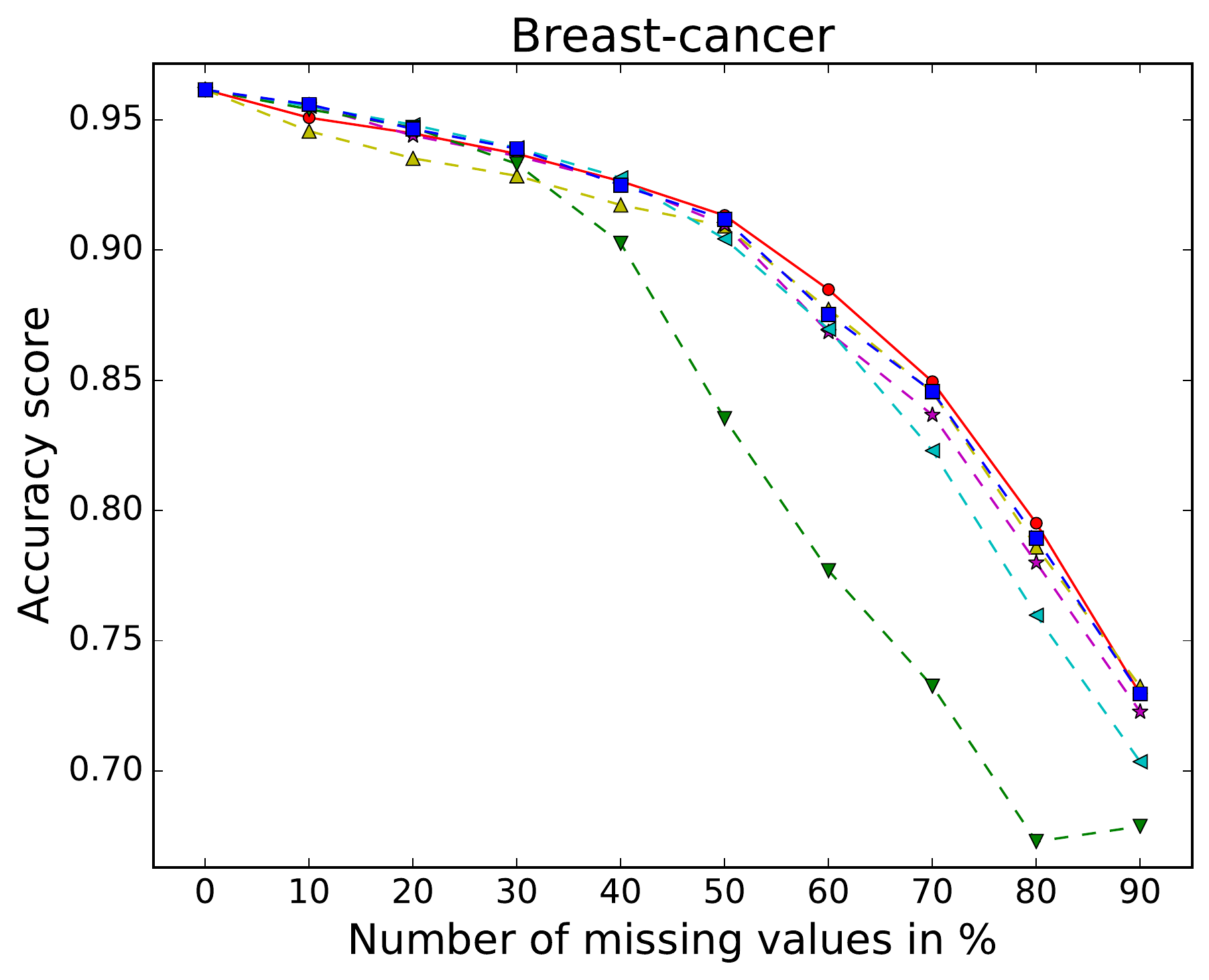}
	\includegraphics[width=0.24\textwidth]{./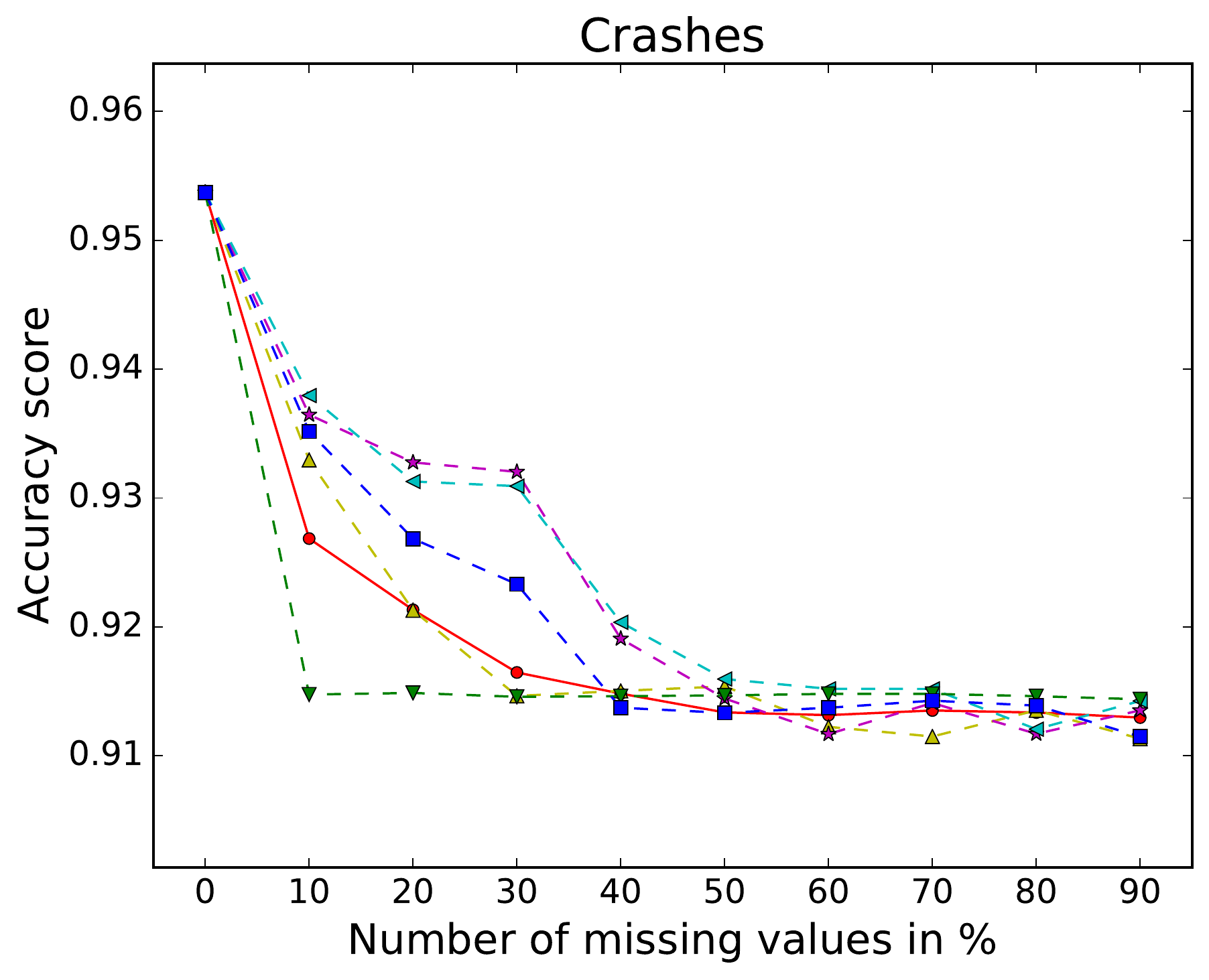}\\
	\includegraphics[width=0.24\textwidth]{./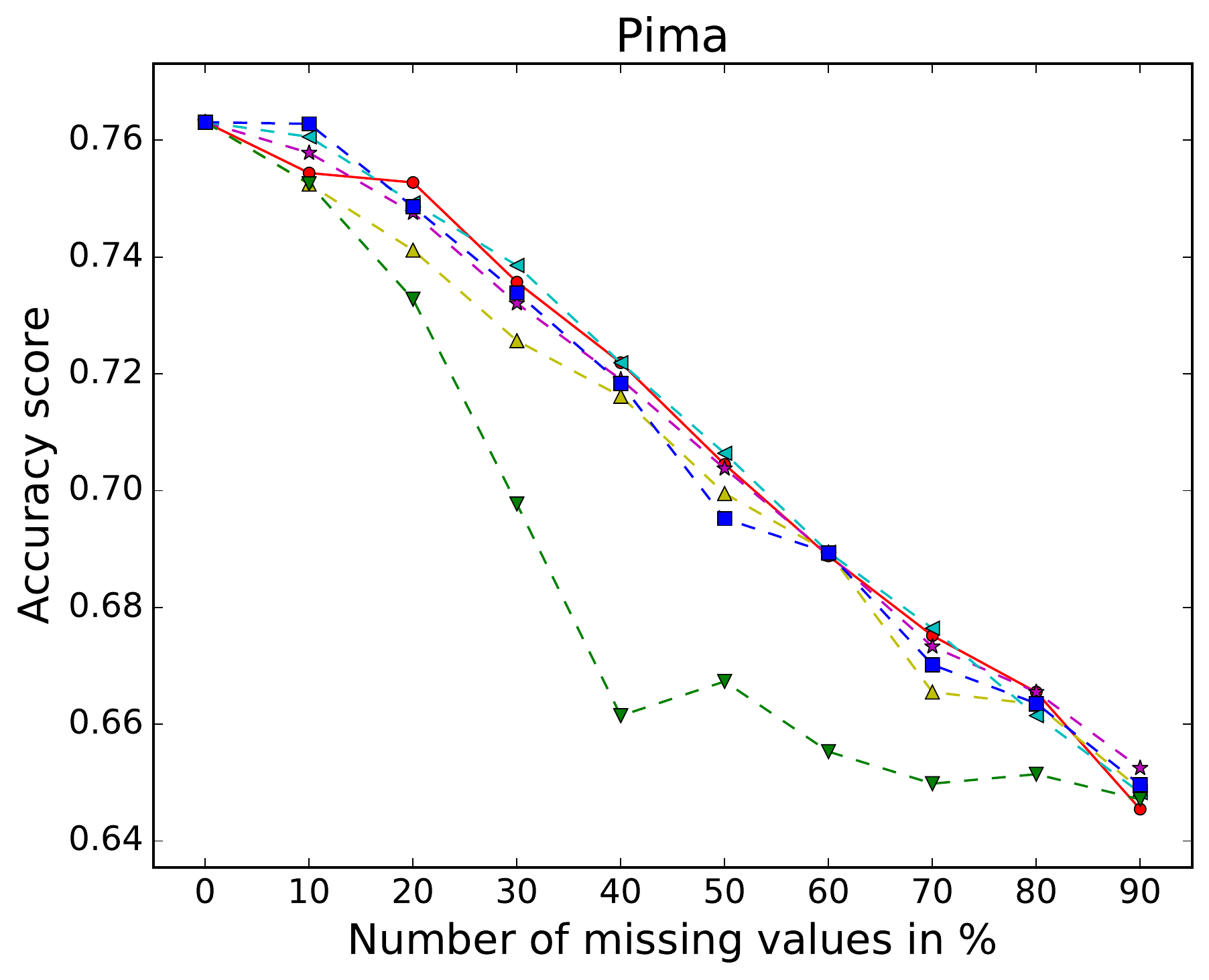}
	\includegraphics[width=0.24\textwidth]{./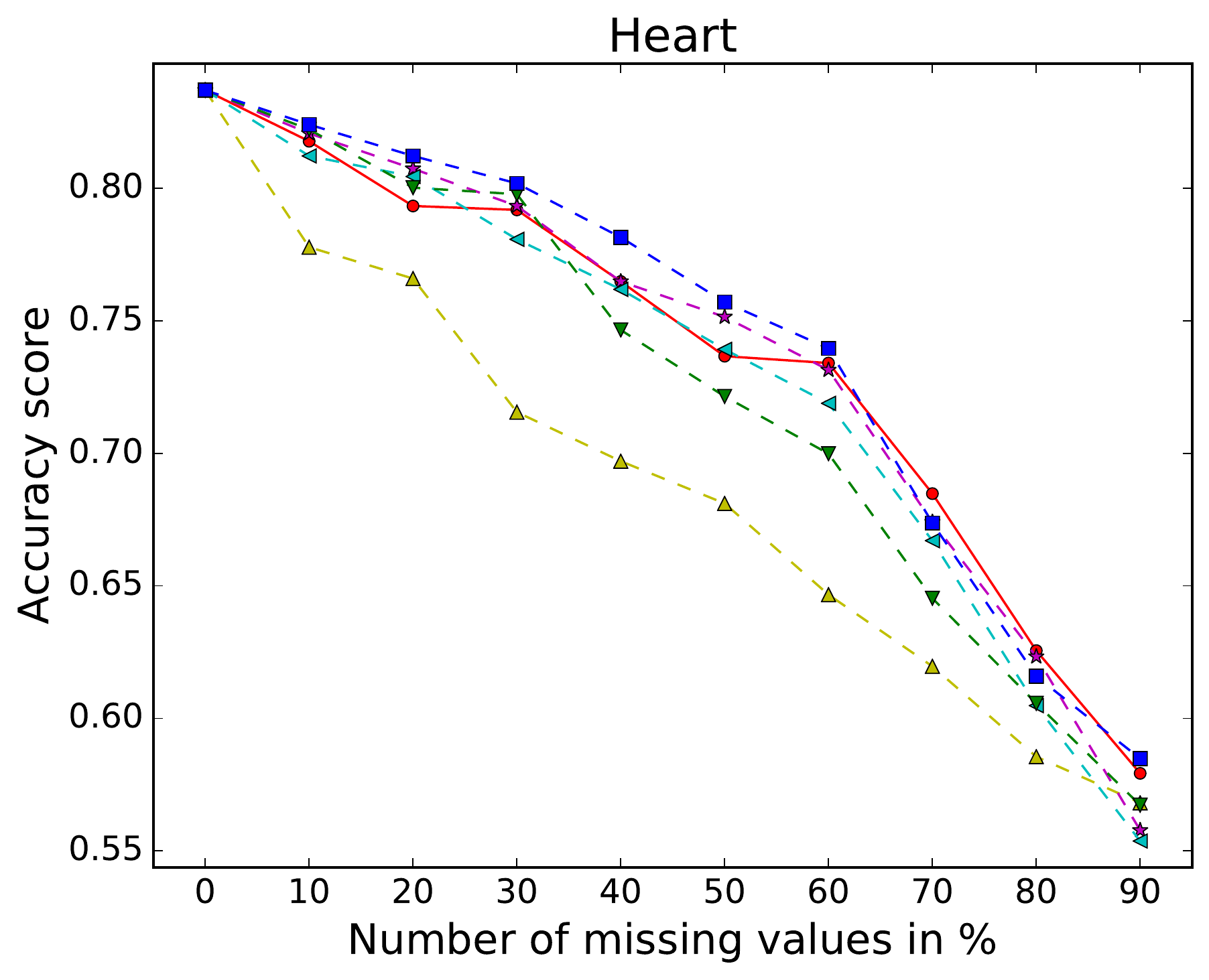}
	\includegraphics[width=0.24\textwidth]{./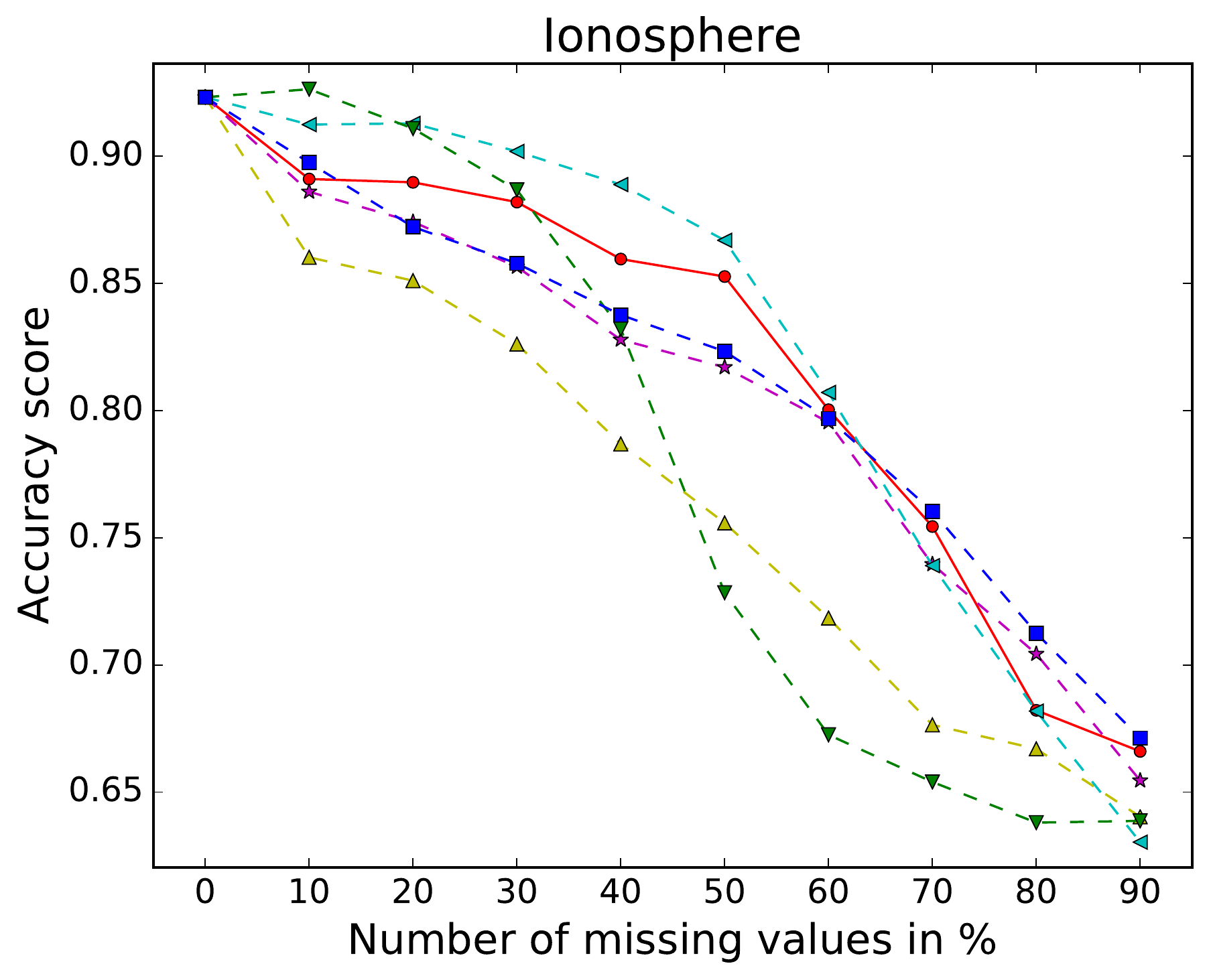}
	\includegraphics[width=0.24\textwidth]{./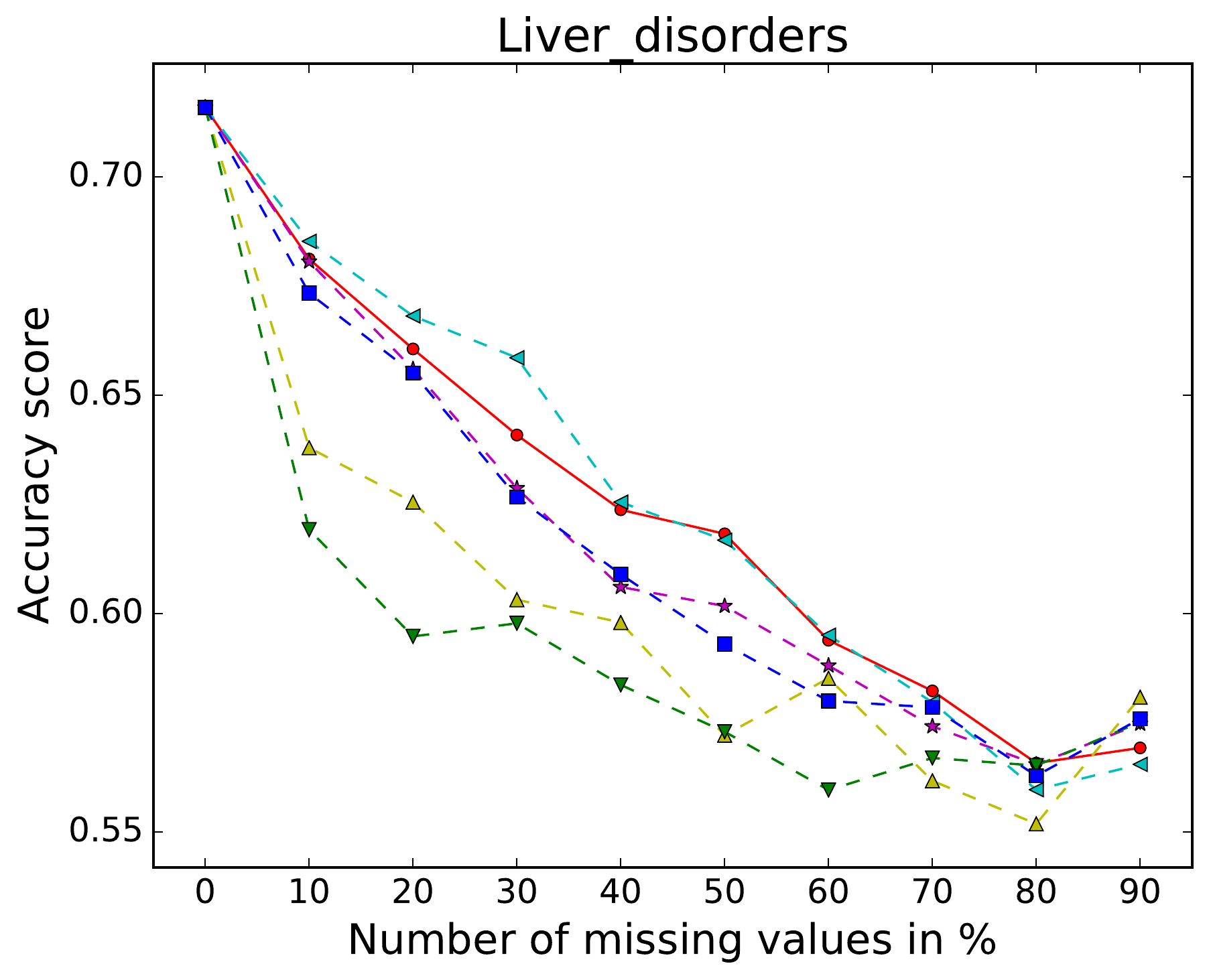}
	\caption{Classification results measured by accuracy reported on test set when missing entries satisfy MCAR.}
	\label{fig:uciMCAR}
\end{figure*}

\begin{figure*}[t]
	\centering
	\includegraphics[width=0.5\textwidth]{./figs/legend-crop.pdf}\\
	\includegraphics[width=0.24\textwidth]{./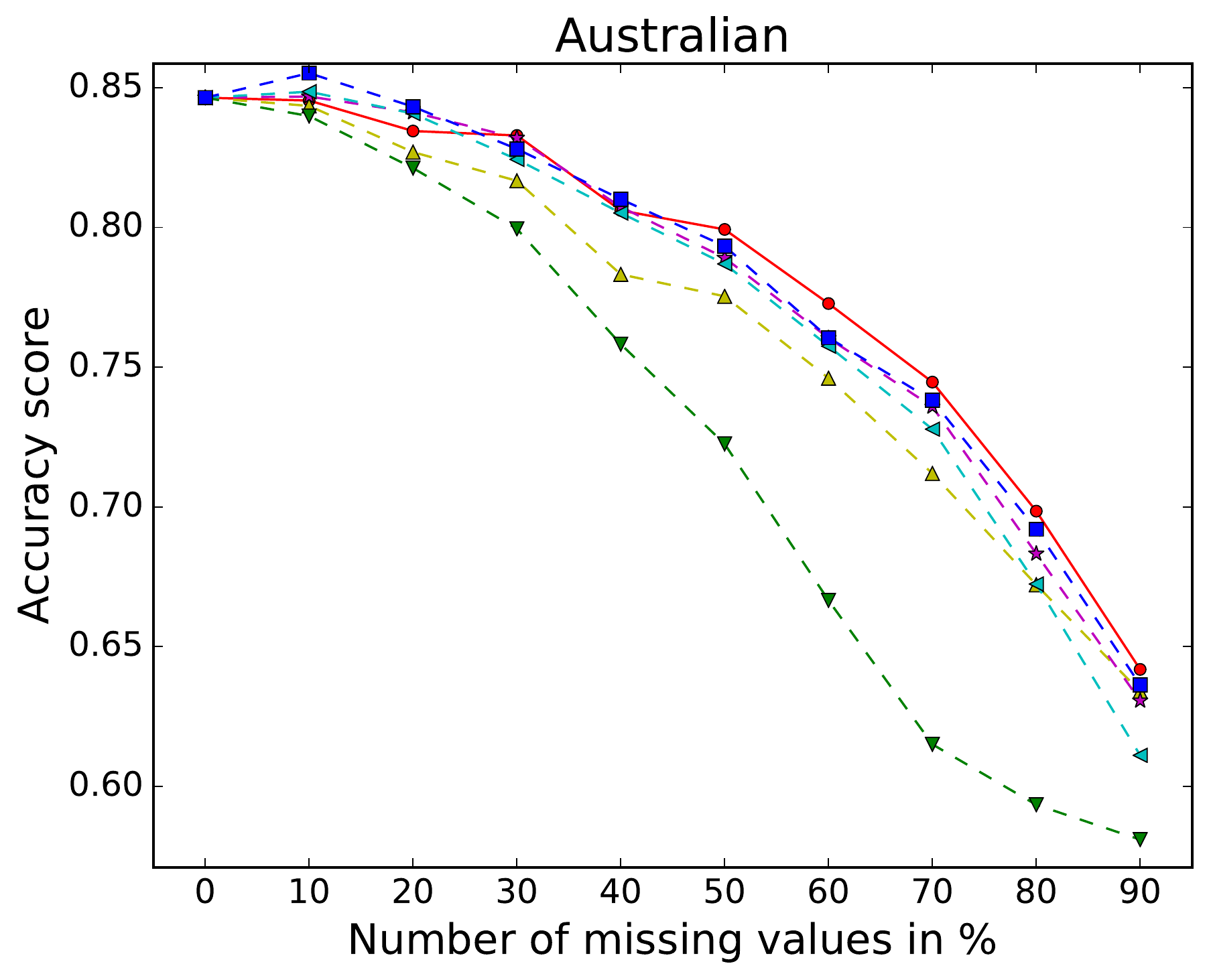}
	\includegraphics[width=0.24\textwidth]{./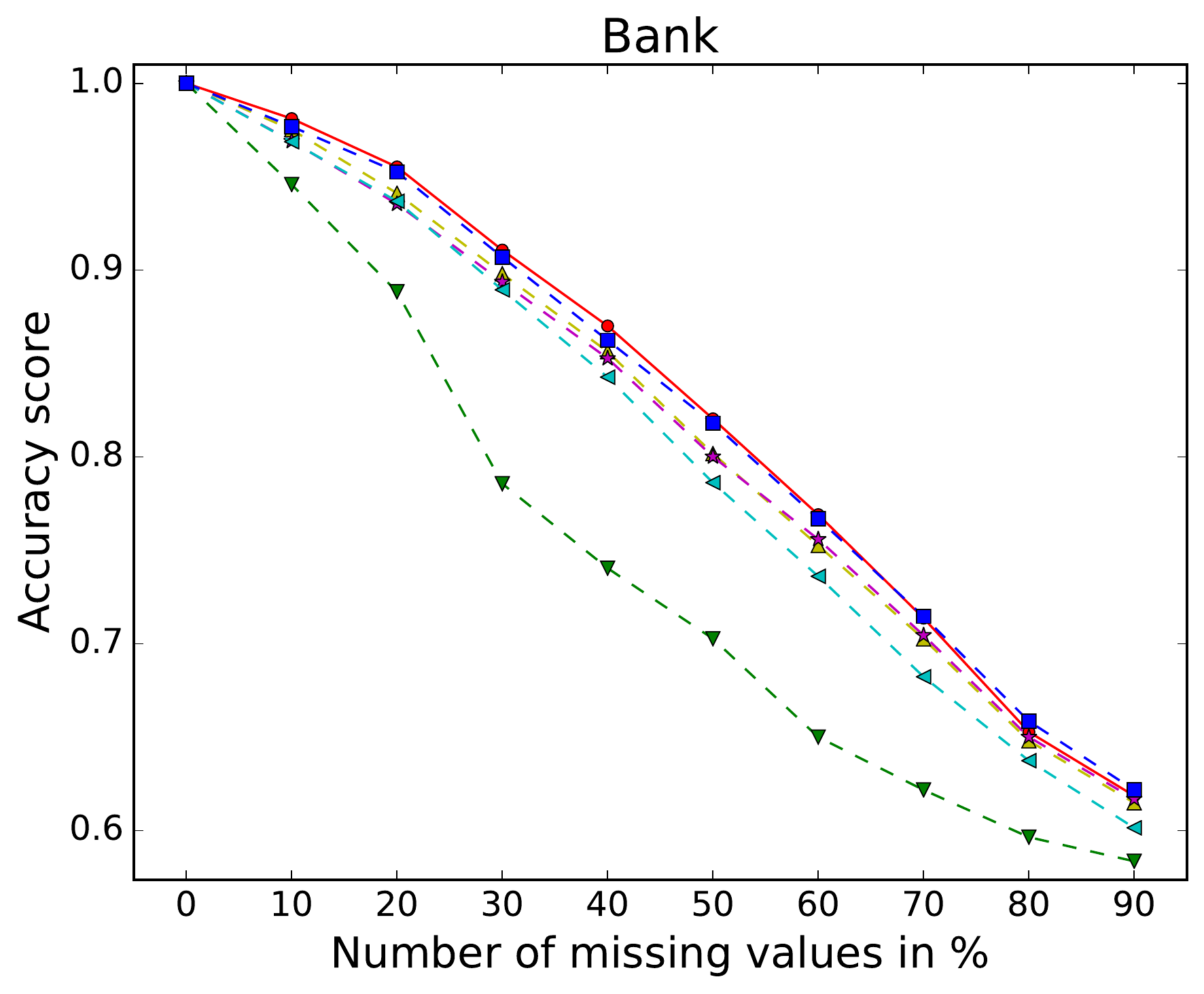}
	\includegraphics[width=0.24\textwidth]{./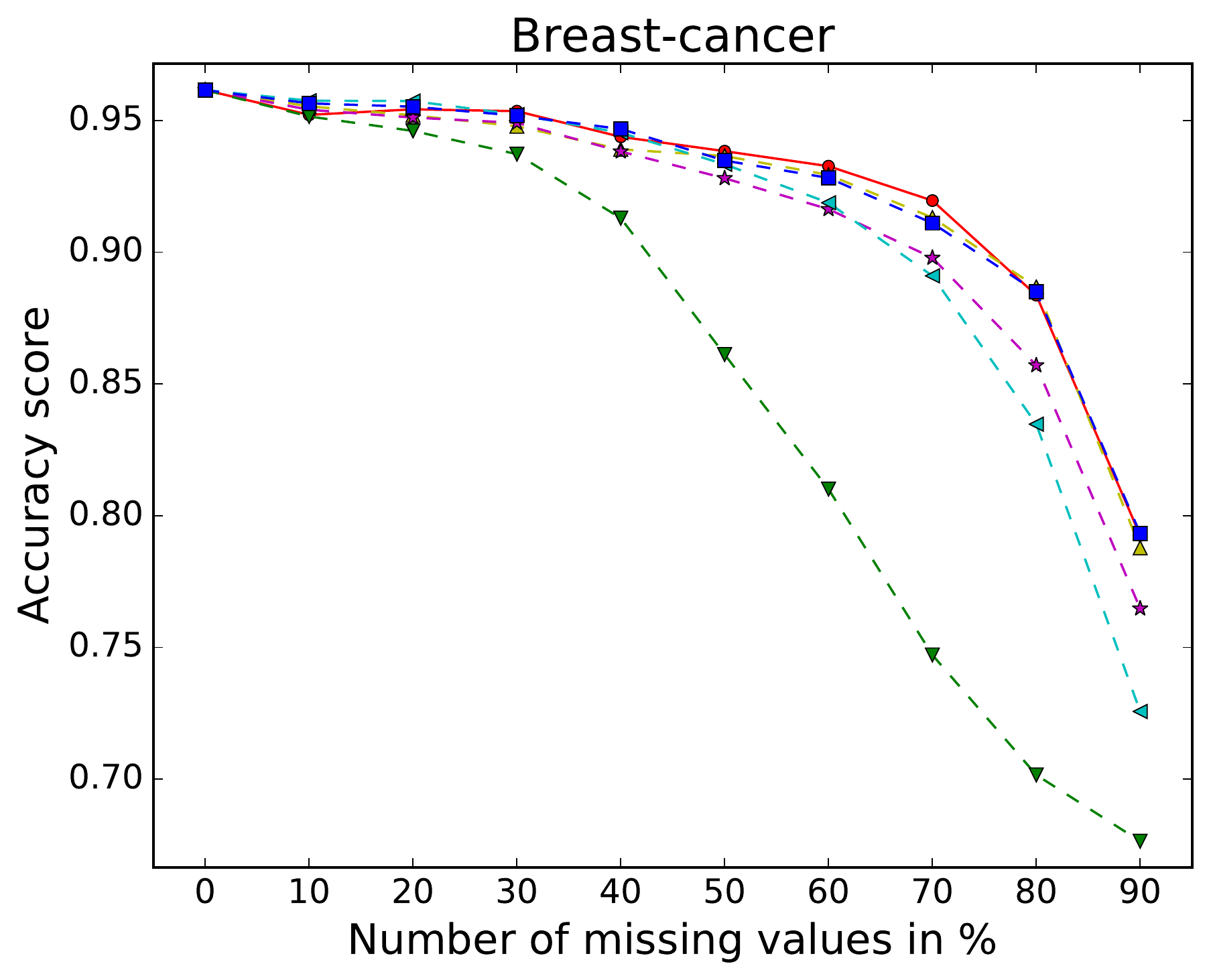}
	\includegraphics[width=0.24\textwidth]{./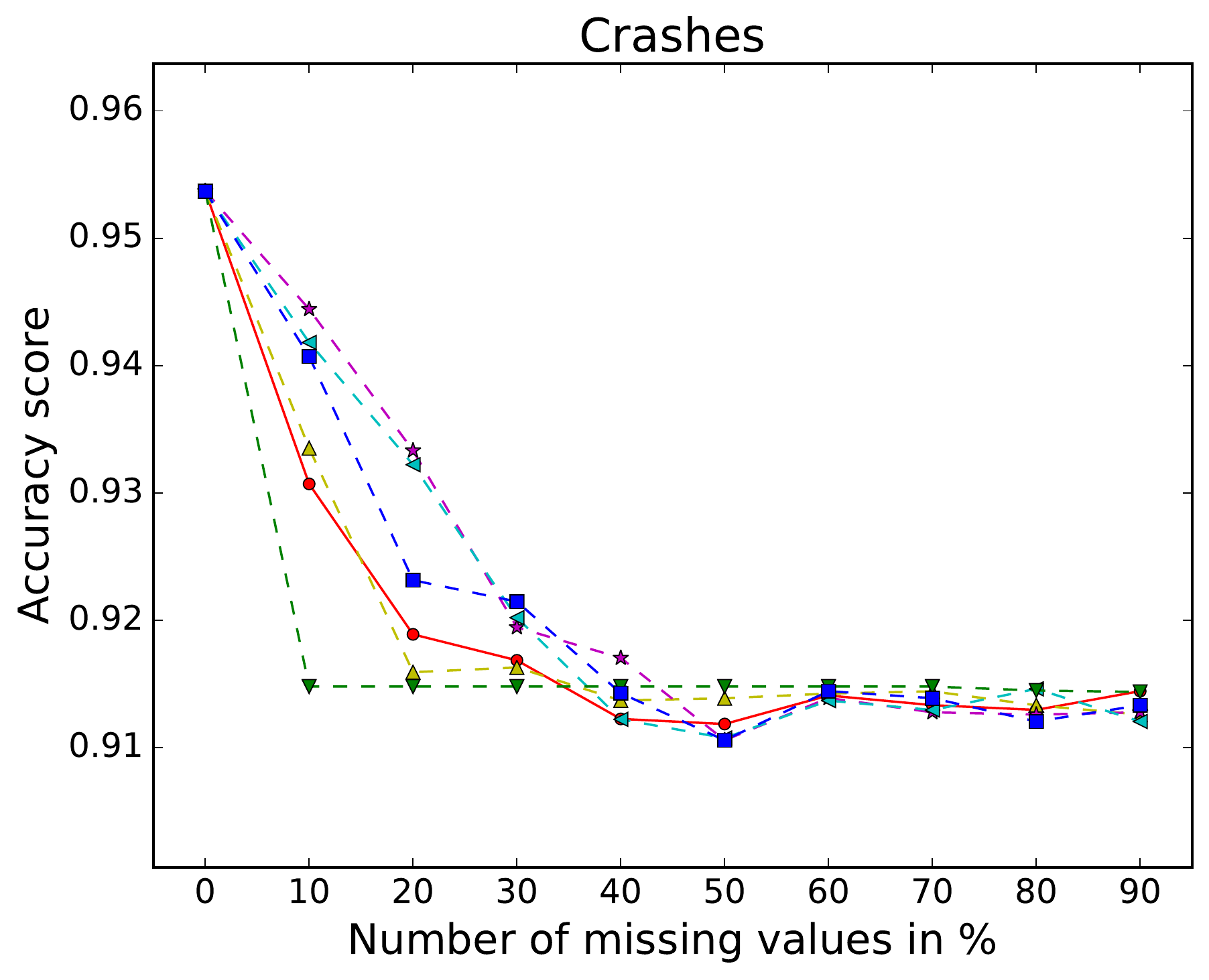}
	\includegraphics[width=0.24\textwidth]{./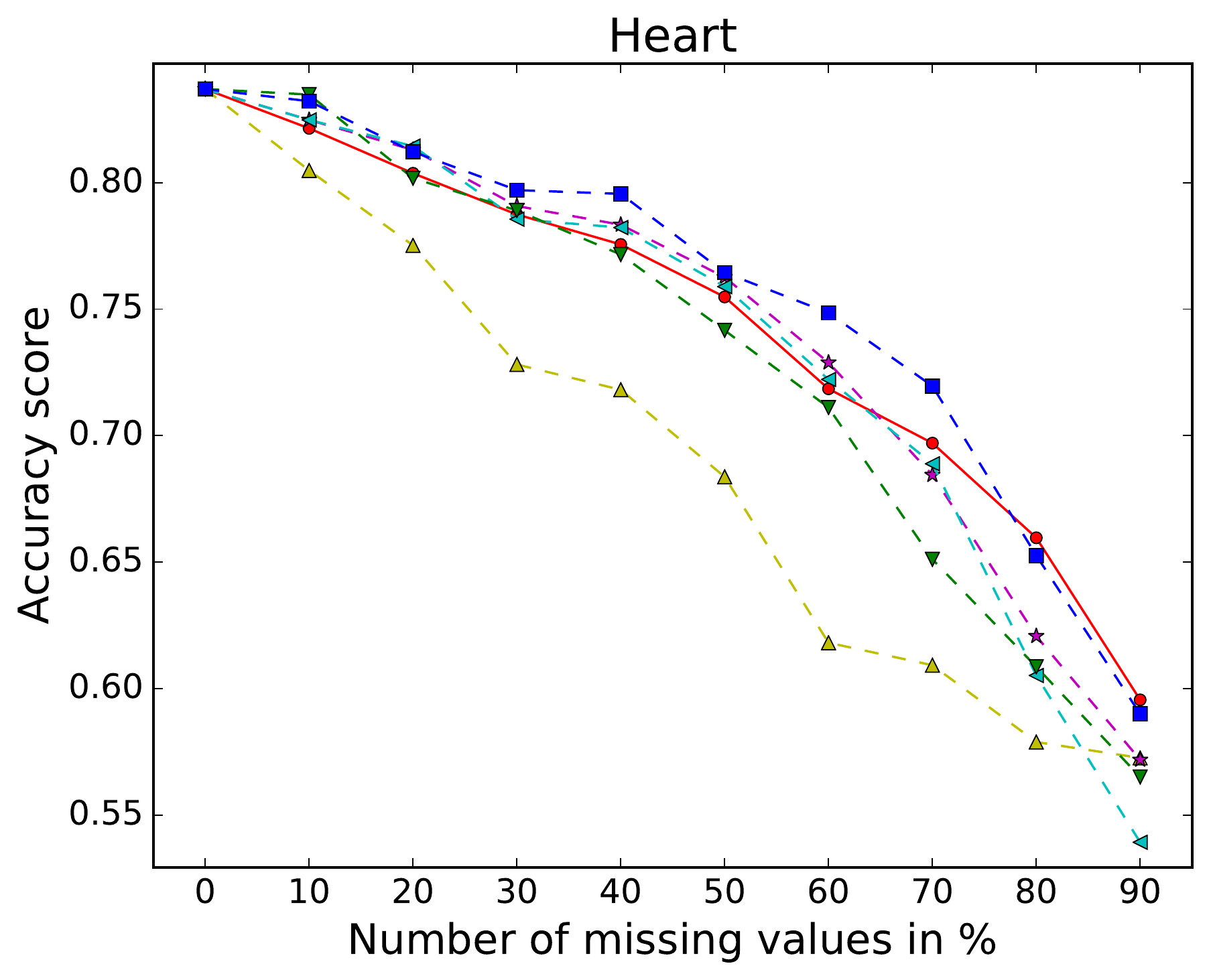}
	\includegraphics[width=0.24\textwidth]{./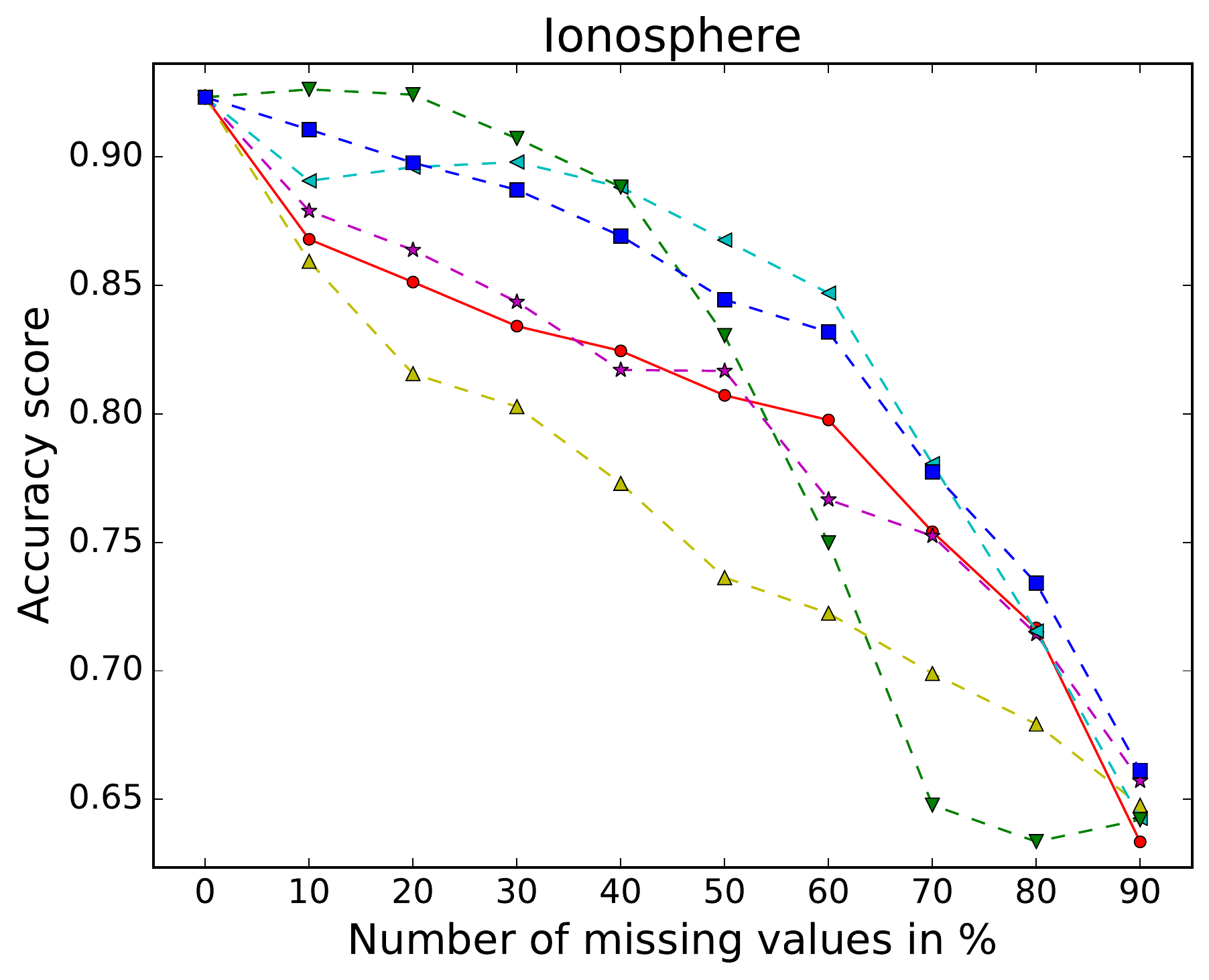}
	\includegraphics[width=0.24\textwidth]{./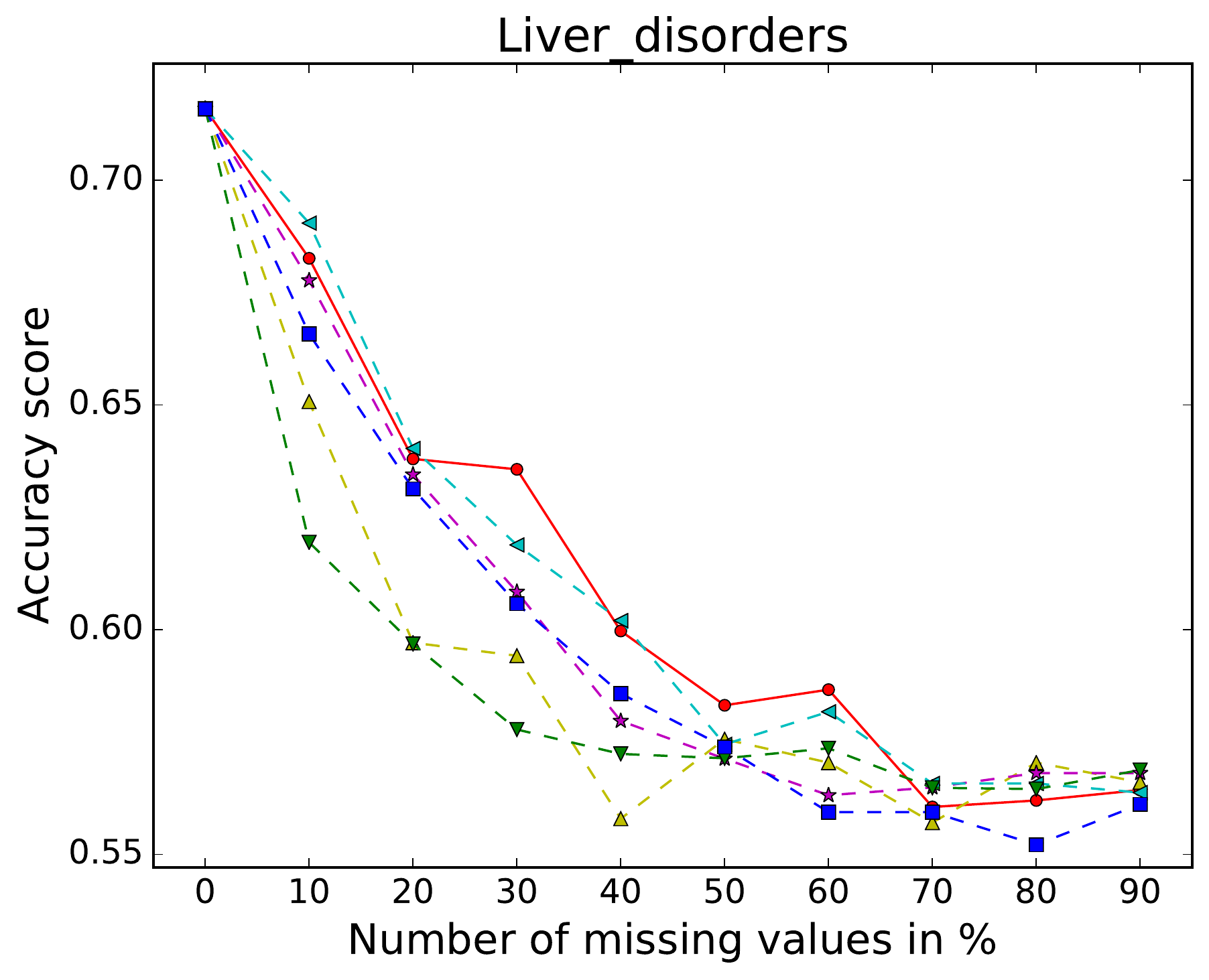}
	\includegraphics[width=0.24\textwidth]{./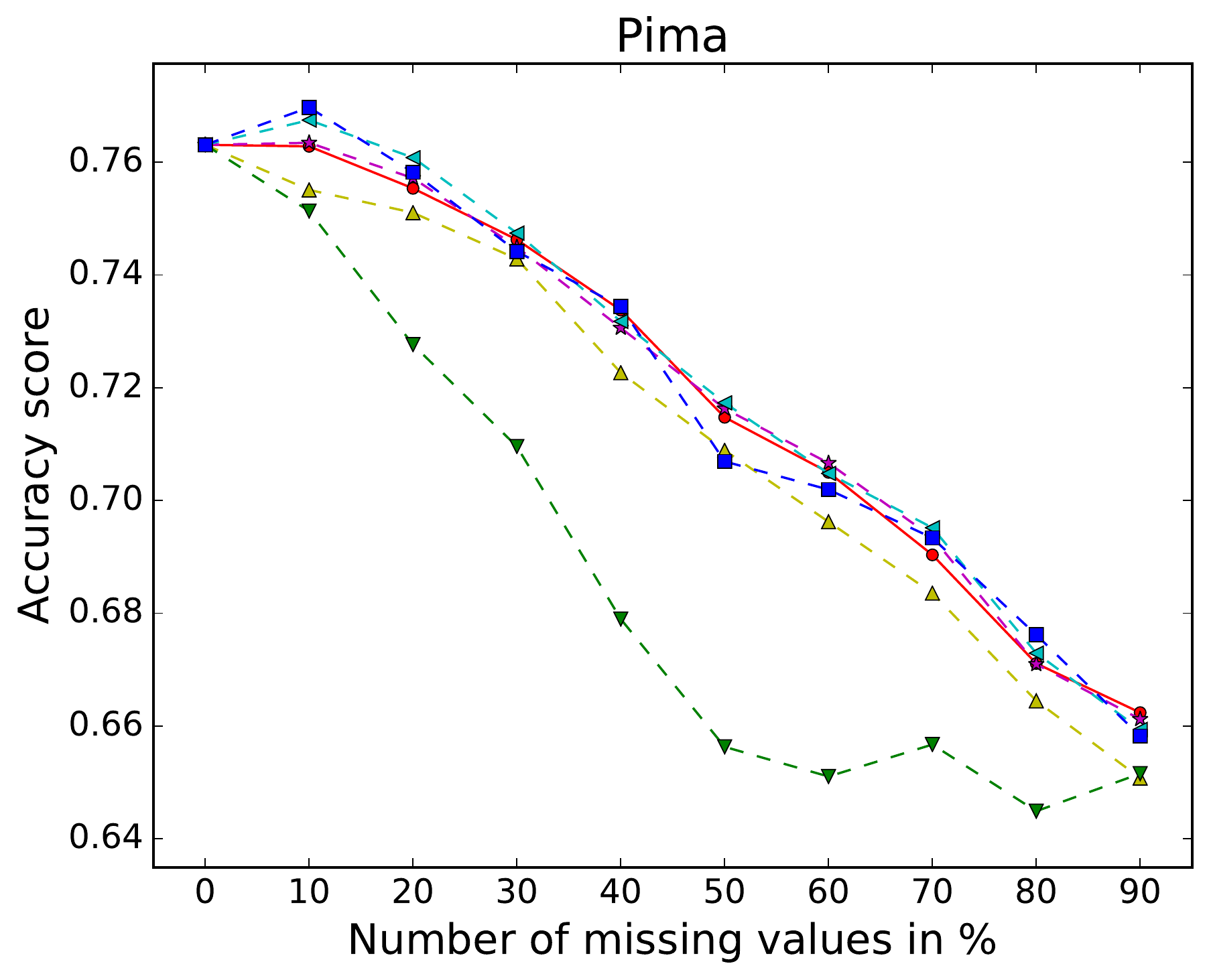}
	\caption{Classification results measured by accuracy reported on test set when missing entries satisfy MAR.}
	\label{fig:uciMAR}
\end{figure*}

\begin{figure*}[t]
	\centering
	\includegraphics[width=0.5\textwidth]{./figs/legend-crop.pdf}\\
	\includegraphics[width=0.24\textwidth]{./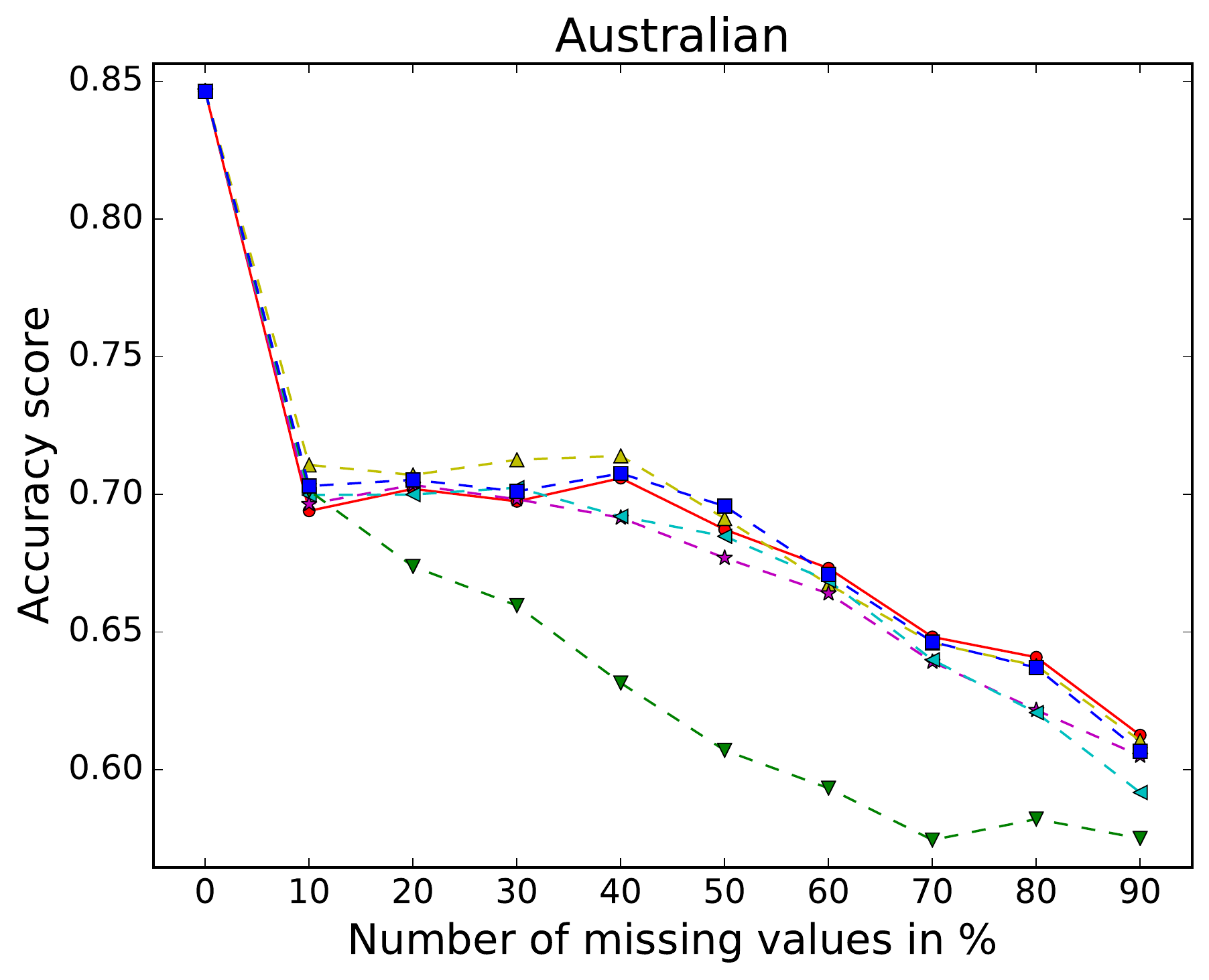}
	\includegraphics[width=0.24\textwidth]{./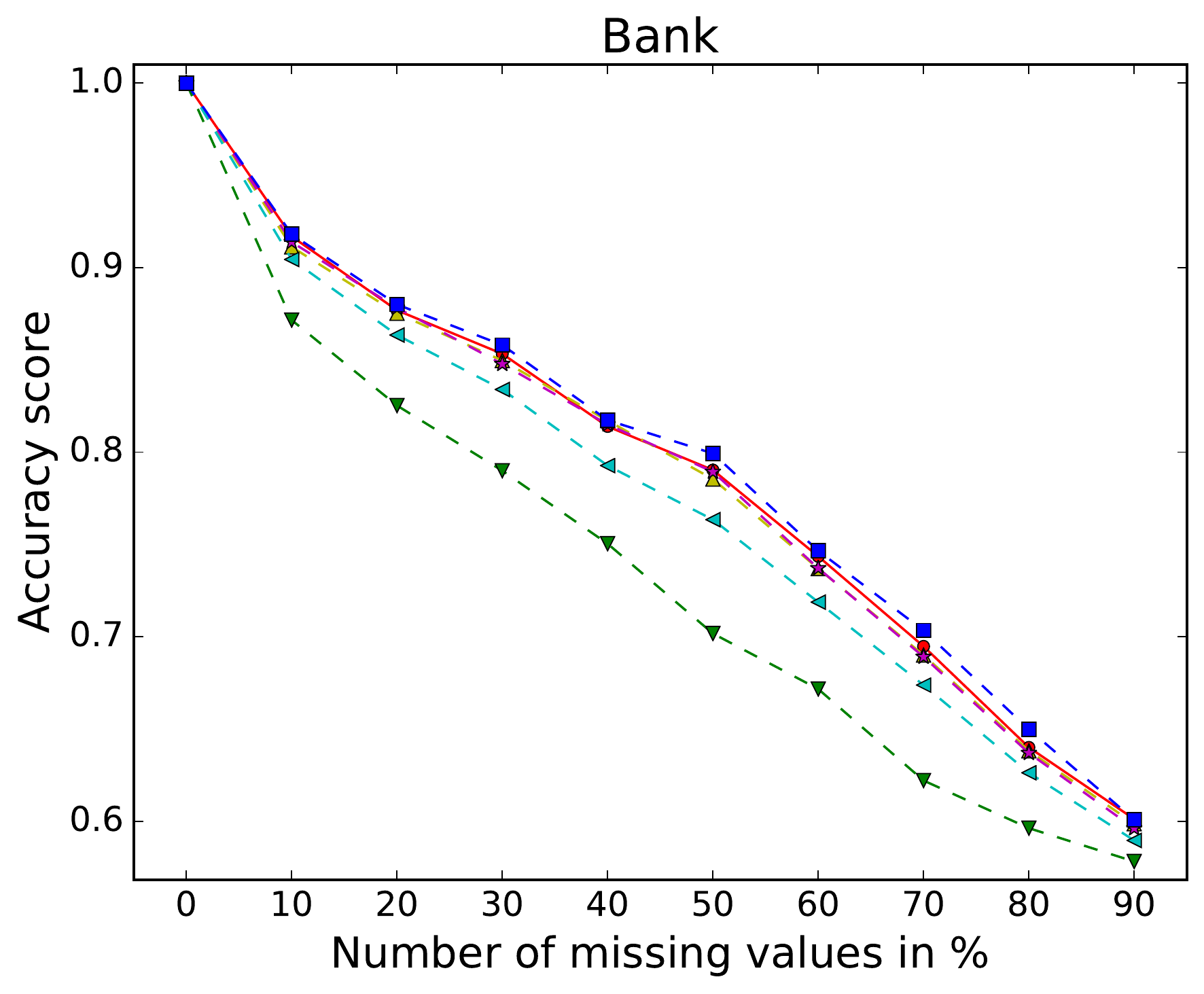}
	\includegraphics[width=0.24\textwidth]{./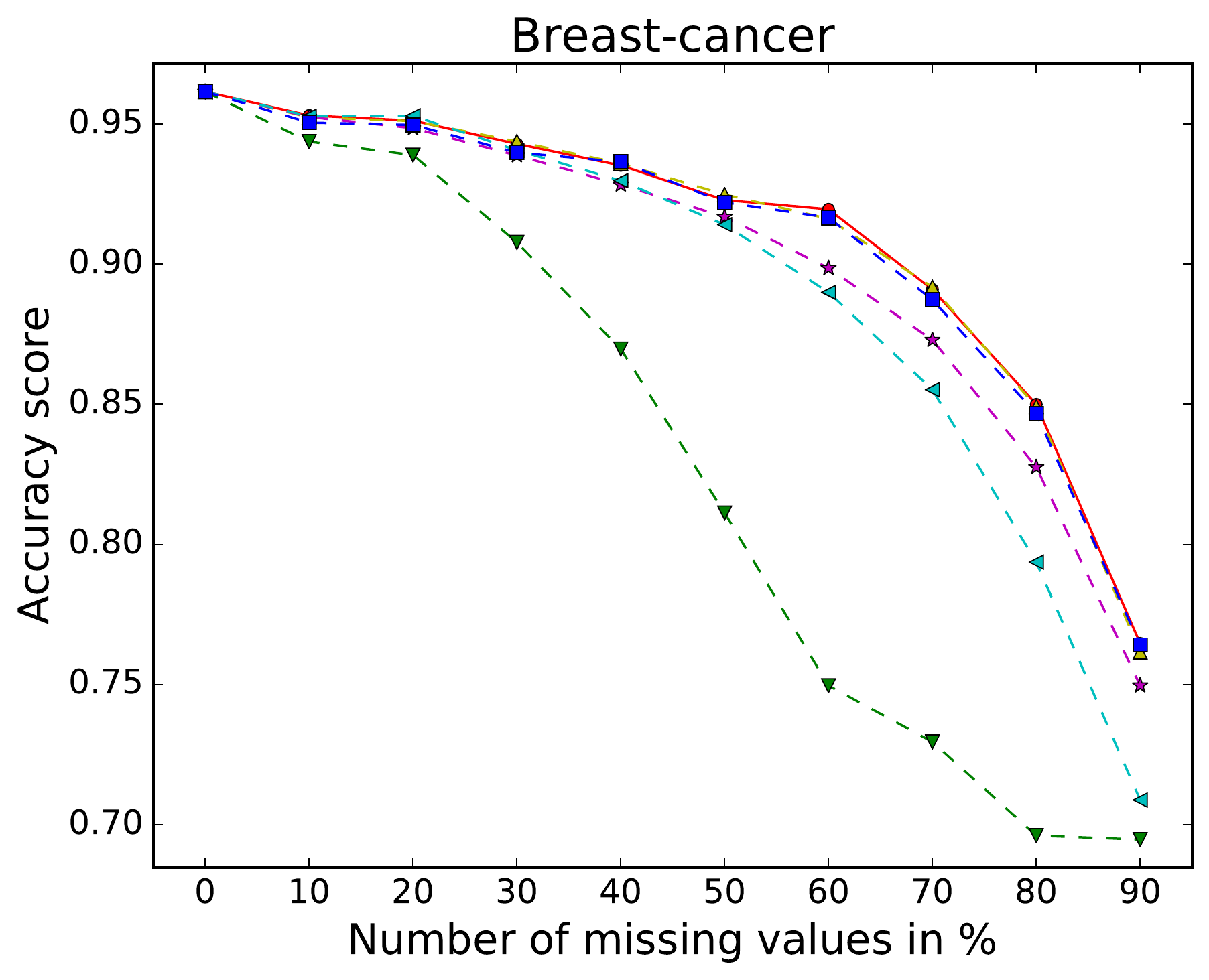}
	\includegraphics[width=0.24\textwidth]{./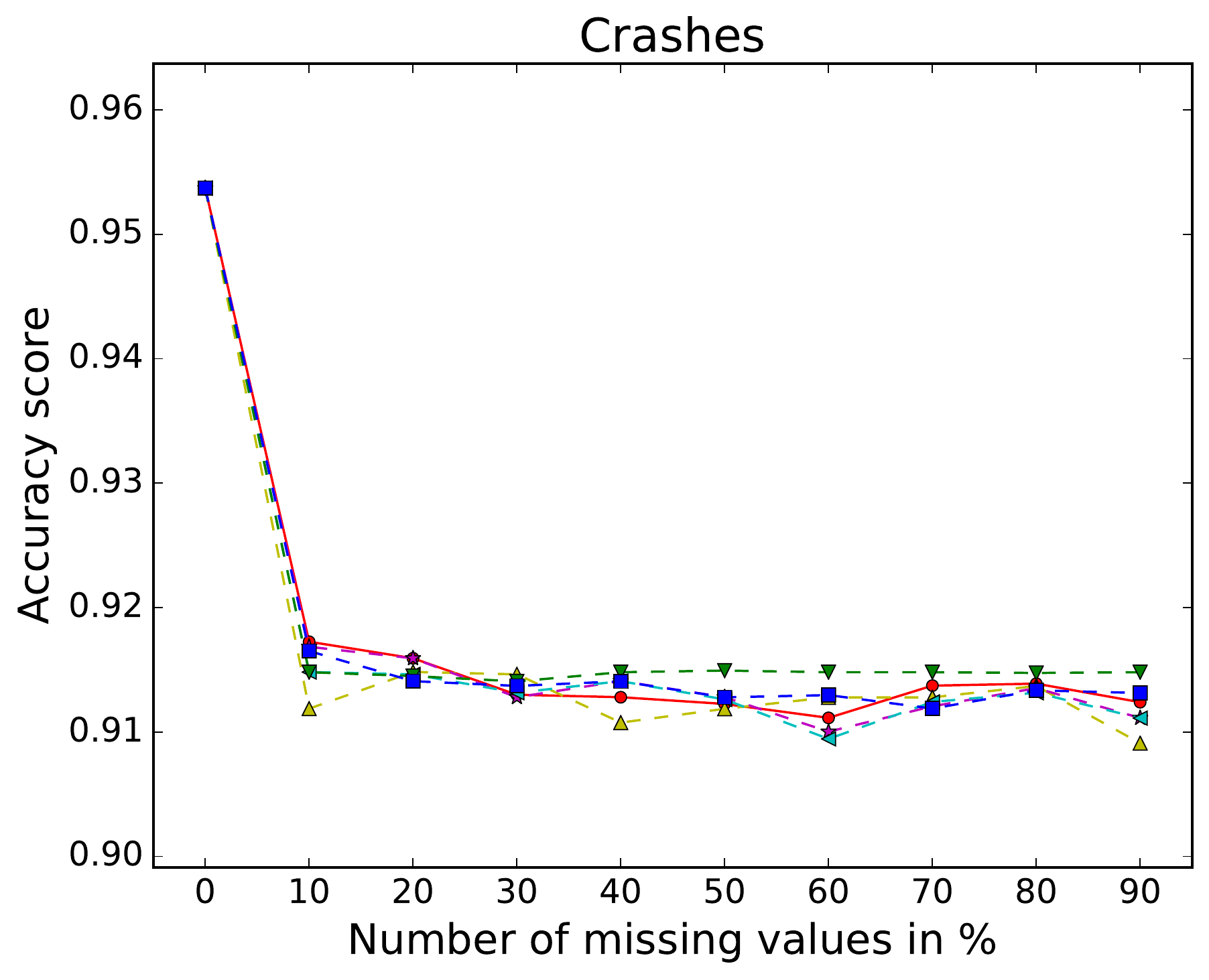}
	\includegraphics[width=0.24\textwidth]{./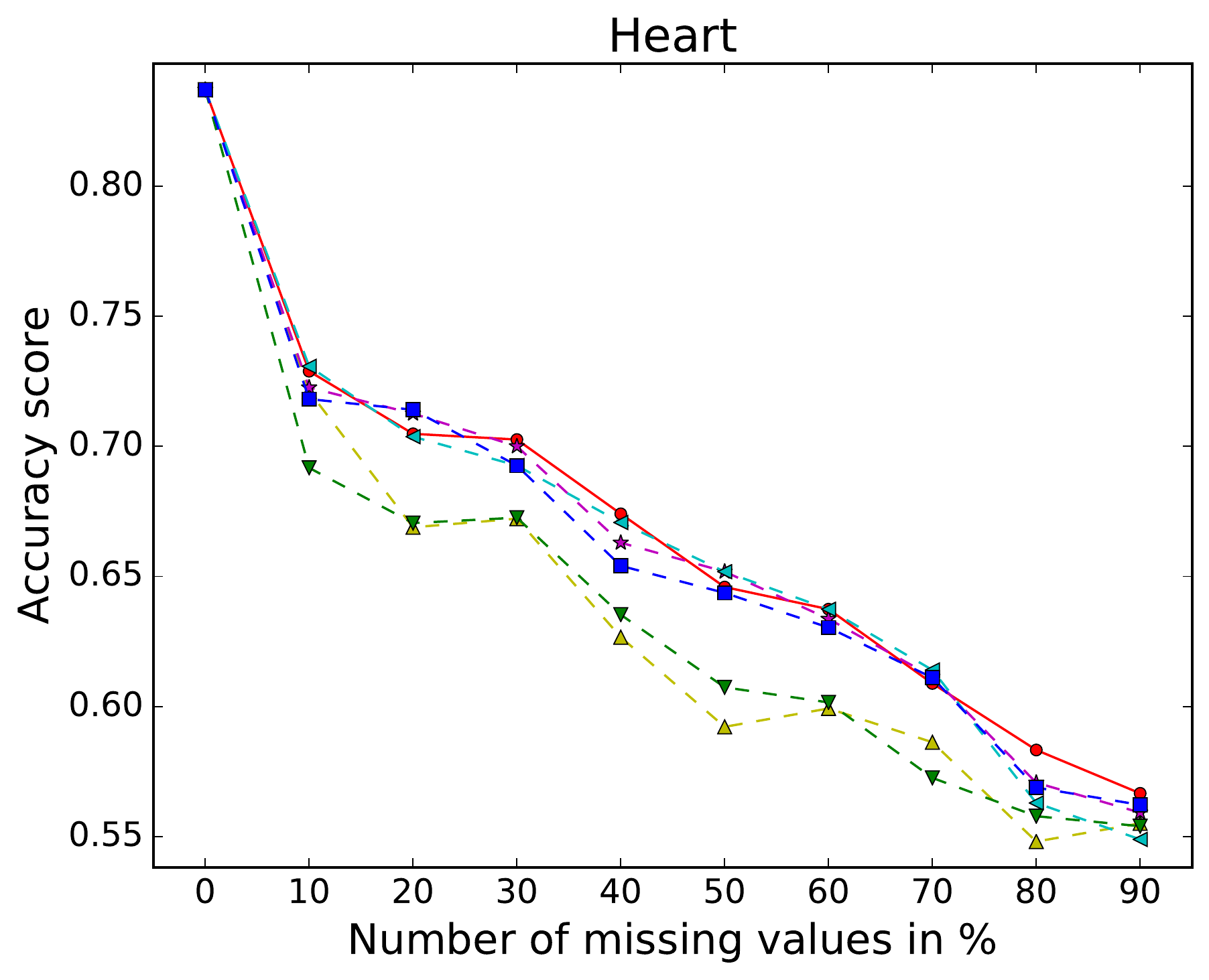}
	\includegraphics[width=0.24\textwidth]{./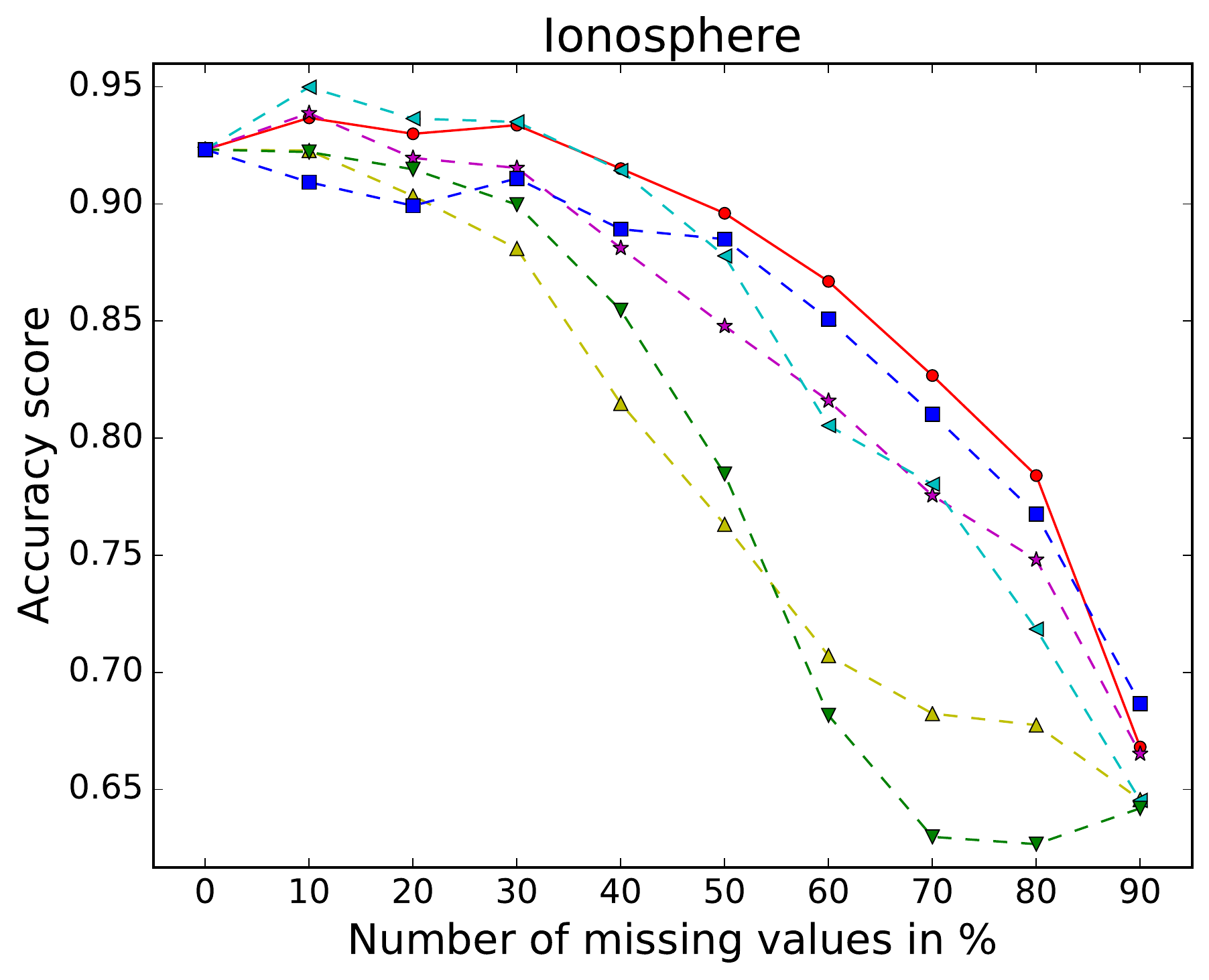}
	\includegraphics[width=0.24\textwidth]{./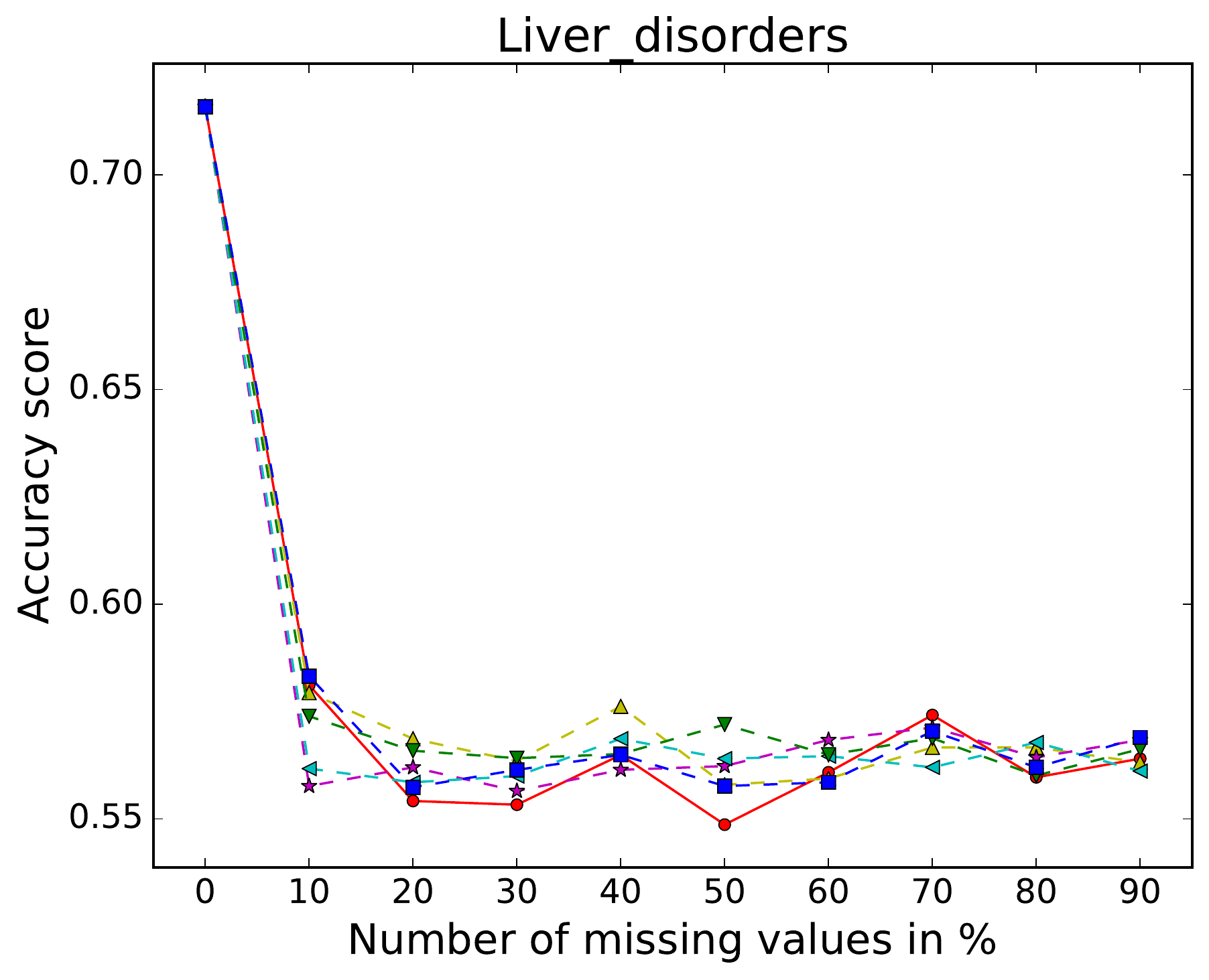}
	\includegraphics[width=0.24\textwidth]{./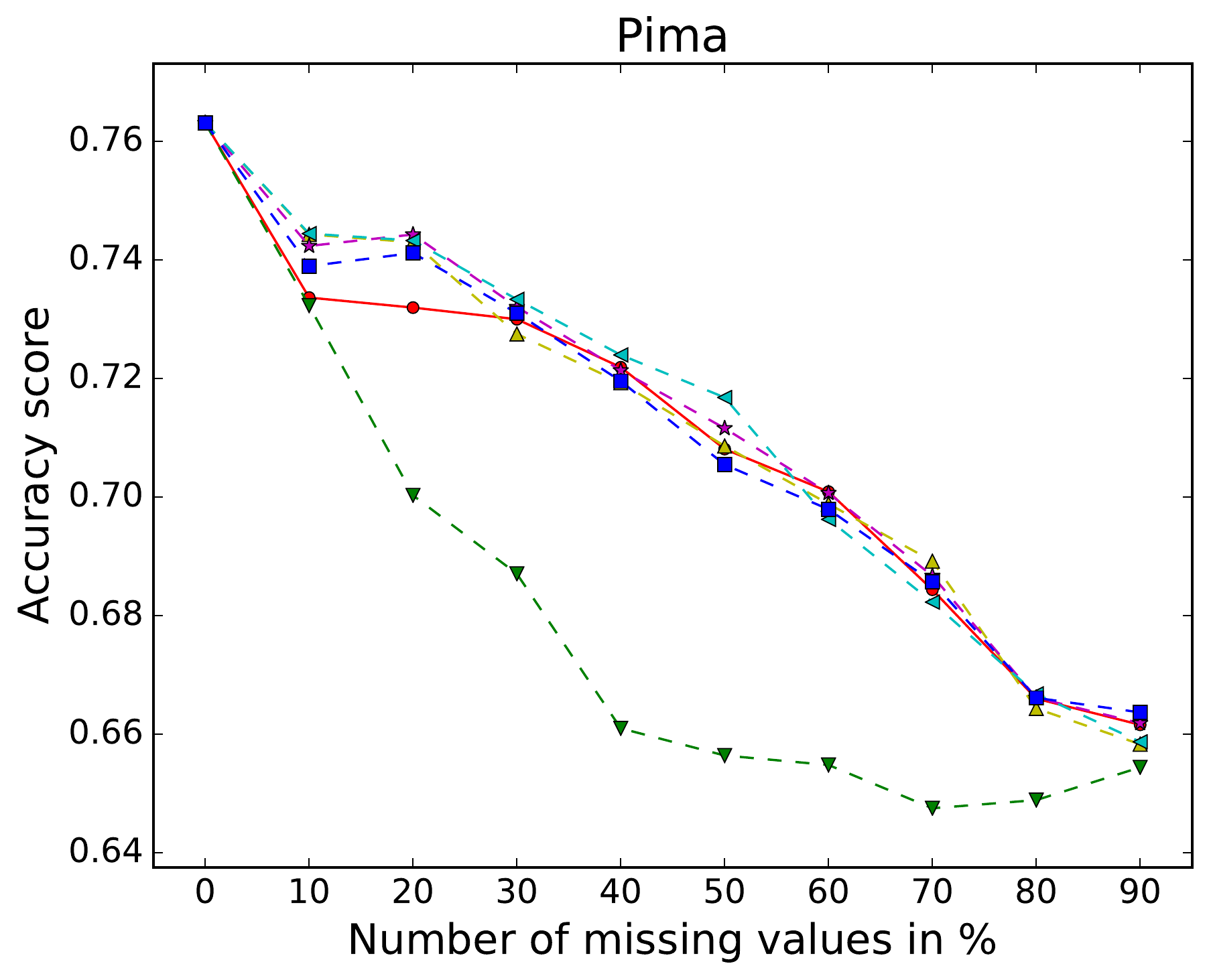}
	\caption{Classification results measured by accuracy reported on test set when missing entries satisfy NMAR.}
	\label{fig:uciMNAR}
\end{figure*}

\begin{figure}[t]
	\centering
	\includegraphics[width=0.45\textwidth]{./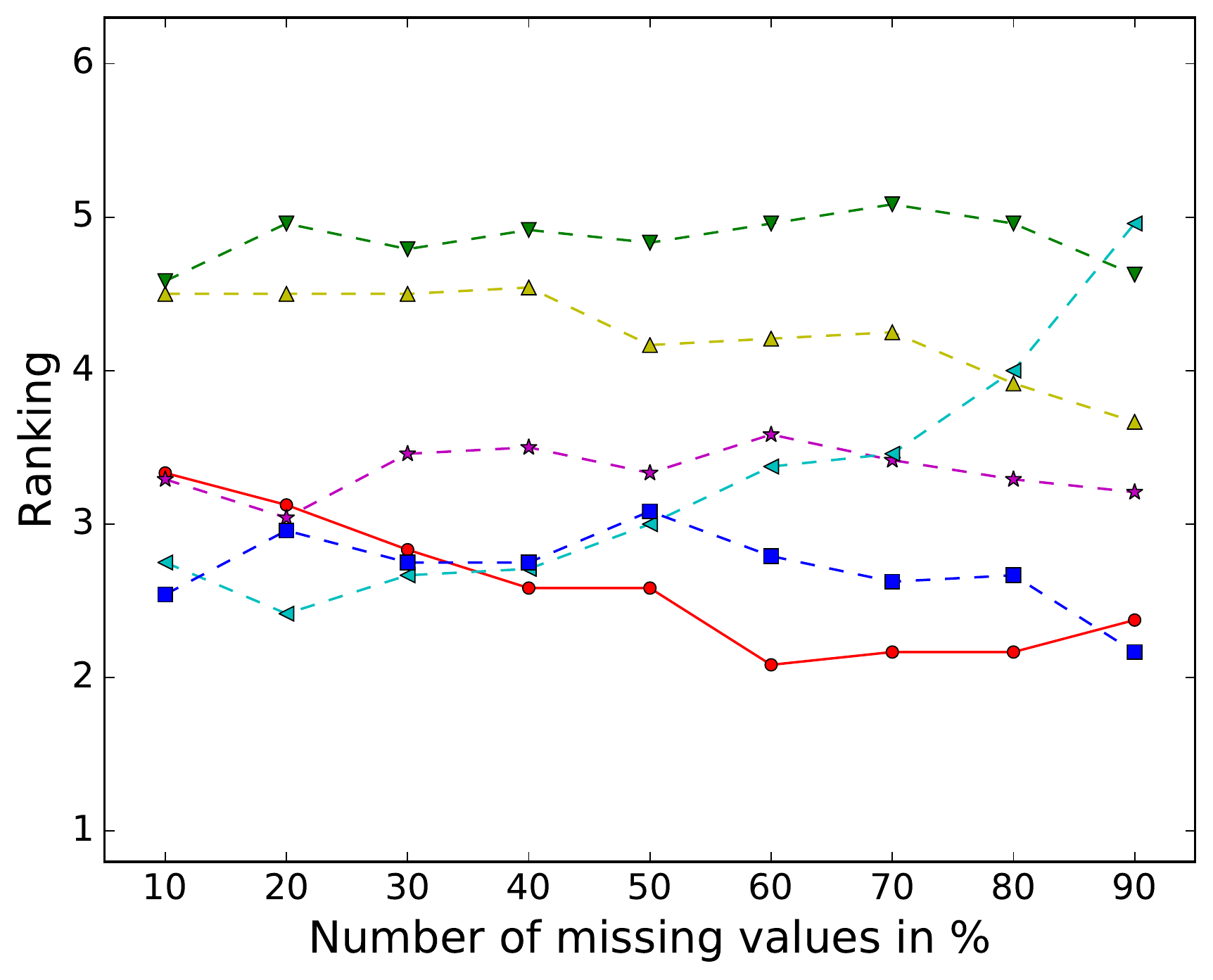}
	\caption{Ranking calculated over all data sets and missing data scenarios (the lower is the better).}
	\label{fig:rank}
\end{figure}

\begin{figure}[t]
	\centering
	\includegraphics[width=0.45\textwidth]{./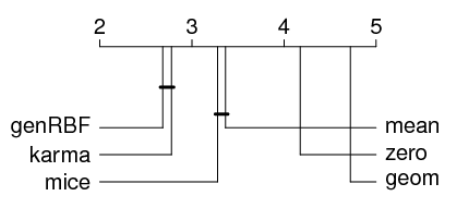}
	\caption{Visualization of statistical comparison.}
	\label{fig:stat}
\end{figure}

The following theorem gives a final formula for the \our{} kernel function.
\begin{theorem}
	Let $F=N(m,\Sigma)$ be a density on $\R^N$ and let $\sigma >0$ be fixed. We assume that $N(m^V, \Sigma^V)$ and $N(m^W, \Sigma^W)$ represent missing data points $x+V$ and $y+W$. Then, the scalar product \eqref{eq:scalarProb} equals
	\begin{equation} \label{eq:kernelTh}
	K_\sigma(x+V,y+W)=Z(V,W) \exp(-\tfrac{1}{2}\|m^V-m^W\|^2_{\hat{\Sigma}}),
\end{equation}
where $\hat{\Sigma}=2\sigma^2 I+\Sigma^V+\Sigma^W$ and the normalization factor equals:
$$
Z(V,W)= \frac{\det^{1/4}(I+\frac{1}{\sigma^2}{\Sigma}^V)\det^{1/4}(I+\frac{1}{\sigma^2}{\Sigma}^W)}{\det^{1/2}(I+\frac{1}{2\sigma^2}({\Sigma}^V+{\Sigma}^W))}.
$$
\end{theorem}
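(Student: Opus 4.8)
The plan is to evaluate the three Gaussian $\L2$ inner products appearing in \eqref{eq:scalarProb}---the numerator and the two norms in the denominator---by direct application of the closed form \eqref{eq:scalarProduct}, and then to simplify the resulting product of Gaussian normalization constants into the stated factor $Z(V,W)$. The one point that needs care before invoking \eqref{eq:scalarProduct} is that this formula was stated only for non-degenerate Gaussians, whereas $\Sigma^V$ and $\Sigma^W$ may be singular (they are whenever $\dim V < N$). This is exactly why the regularization step \eqref{eq:conv} is essential: the convolved covariances $\Sigma^V+\sigma^2 I$ and $\Sigma^W+\sigma^2 I$ are strictly positive definite, so \eqref{eq:scalarProduct} does apply to the embedded densities $\phi_\sigma(x+V)$ and $\phi_\sigma(y+W)$. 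I would state this observation first.

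For the numerator I would apply \eqref{eq:scalarProduct} to $\phi_\sigma(x+V)=N(m^V,\Sigma^V+\sigma^2 I)$ and $\phi_\sigma(y+W)=N(m^W,\Sigma^W+\sigma^2 I)$, obtaining $N(m^V-m^W,\hat{\Sigma})(0)$ with $\hat{\Sigma}=2\sigma^2 I+\Sigma^V+\Sigma^W$ exactly as in the statement. Evaluating the Gaussian density formula at the origin then produces $(2\pi)^{-N/2}\det^{-1/2}\hat{\Sigma}\,\exp(-\tfrac12\|m^V-m^W\|^2_{\hat{\Sigma}})$, which already isolates the exponential term of \eqref{eq:kernelTh}. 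For the denominator I would compute each norm via $\|N(m^V,\Sigma^V+\sigma^2 I)\|^2 = N(0,\,2(\Sigma^V+\sigma^2 I))(0)$ (the means cancel), and similarly for $W$; taking square roots gives two factors of the form $(2\pi)^{-N/4}\det^{-1/4}(2\Sigma^V+2\sigma^2 I)$.

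Dividing the numerator by the product of the two norms, the $(2\pi)$ powers cancel ($-N/2$ against $-N/4-N/4$), leaving
$$
K_\sigma = \frac{\det^{1/4}(2\Sigma^V+2\sigma^2 I)\,\det^{1/4}(2\Sigma^W+2\sigma^2 I)}{\det^{1/2}\hat{\Sigma}}\exp(-\tfrac12\|m^V-m^W\|^2_{\hat{\Sigma}}).
$$
The final step---and the only genuinely fiddly one---is to massage this ratio of determinants into the claimed $Z(V,W)$. Here I would factor $2\sigma^2$ out of every matrix: $2\Sigma^V+2\sigma^2 I = 2\sigma^2(I+\tfrac{1}{\sigma^2}\Sigma^V)$, and likewise $\hat{\Sigma}=2\sigma^2(I+\tfrac{1}{2\sigma^2}(\Sigma^V+\Sigma^W))$, using $\det(cA)=c^N\det A$ on the $N\times N$ matrices. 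The scalar powers of $2\sigma^2$ in the numerator, $(2\sigma^2)^{N/4}(2\sigma^2)^{N/4}=(2\sigma^2)^{N/2}$, cancel against $(2\sigma^2)^{N/2}$ coming from $\det^{1/2}\hat{\Sigma}$ in the denominator, and what remains is precisely $Z(V,W)$.

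I expect no conceptual obstacle beyond keeping the fractional exponents and the $\tfrac{1}{\sigma^2}$ versus $\tfrac{1}{2\sigma^2}$ scalings straight. Note finally that each $I+\tfrac{1}{\sigma^2}\Sigma^V$ is well defined and has determinant $\geq 1$ since $\Sigma^V$ is positive semidefinite, so no invertibility of $\Sigma^V$ itself is ever required; the degeneracy of $\Sigma^V$ is absorbed entirely by the $\sigma^2 I$ regularization, which is the crux that makes the whole computation legitimate.
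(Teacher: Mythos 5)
Your proposal is correct and follows essentially the same route as the paper, which simply applies the Gaussian inner-product formula \eqref{eq:scalarProduct} to \eqref{eq:scalarProb} and simplifies the resulting ratio of determinants; your algebra (cancellation of the $(2\pi)$ and $(2\sigma^2)^{N/2}$ factors, leading to $Z(V,W)$) checks out. Your explicit remark that the $\sigma^2 I$ regularization is what licenses the use of \eqref{eq:scalarProduct} despite the possible singularity of $\Sigma^V$ and $\Sigma^W$ is a point the paper leaves implicit, and is a worthwhile addition.
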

\begin{proof}
It is sufficient to apply \eqref{eq:scalarProduct} to formula \eqref{eq:scalarProb}.
\end{proof}
\begin{comment}
\begin{proof}
Making use of \eqref{eq:scalarProduct}, we have
\begin{multline*}
\il{N(m^V,\Sigma^V + \sigma^2 I),N(m^W,\Sigma^W+ \sigma^2 I)}\\
= N(m^V - m^W, \Sigma^V + \Sigma^W +2 \sigma^2 I)(0) \\
=\frac{1}{(2 \pi)^{N/2} \det^{1/2}(\hat{\Sigma})} \exp(-\frac{1}{2} \| m^V - m^W \|_{\hat{\Sigma}}).
\end{multline*}
Since the norm of normal distribution equals
$$
\|N(m,\Sigma)\|=\frac{1}{(2\pi)^{N/4} \det^{1/4} (2\Sigma)},
$$
then the normalization factor is given by
\begin{multline*}
\frac{\|N(m^V - m^W,\hat{\Sigma})\|}{\|N(m^V,\Sigma^V + \sigma^2 I)\| \cdot \| N(m^W,\Sigma^W + \sigma^2 I)\|}=\\
\frac{(2\pi)^{N/4} \det^{1/4} (2\Sigma^V + 2 \sigma^2 I) (2\pi)^{N/4} \det^{1/4} (2\Sigma^V + 2 \sigma^2 I)}{(2 \pi)^{N/2} \det^{1/2}(2\sigma^2 I+\Sigma^V+\Sigma^W)} \\
= \frac{\det^{1/4} (2\Sigma^V + 2 \sigma^2 I) \det^{1/4} (2\Sigma^V + 2 \sigma^2 I)}{\det^{1/2}(2\sigma^2 I+\Sigma^V+\Sigma^W)} \\
=  \frac{\det^{1/4} (\frac{1}{\sigma^2}\Sigma^V + I) \det^{1/4} (\frac{1}{\sigma^2}\Sigma^V + I)}{\det^{1/2}( I+\frac{1}{\sigma^2}(\Sigma^V+\Sigma^W))} = Z(V,W),
\end{multline*}

which completes the proof.
\end{proof}
\end{comment}

Let us observe that the above formula generalizes the classical RBF kernel to the case of incomplete data. Indeed, complete data points $x,y$ are represented by Dirac measures, i.e. $m^V = x, m^W = y$ and $\Sigma^V =\Sigma^W = 0$. Then $\hat{\Sigma} = 2 \sigma^2$ and
$$
K_\sigma(x,y) = \exp(-\frac{\|x-y\|^2}{4 \sigma^2}).
$$ 
Taking a parametrization $\gamma = \frac{1}{4 \sigma^2}$ we arrive at the classical formula of RBF kernel. Thus, to be consistent with a typical RBF parametrization, in the experimental section we will use the formula
$$
	K_\gamma(x+V,y+W)=Z(V,W) \exp(-\tfrac{1}{2}\|m^V-m^W\|^2_{\hat{\Sigma}}),
$$
where 
$$
\begin{array}{l}
\hat{\Sigma}=\frac{1}{2\gamma} I+\Sigma^V+\Sigma^W,\\
Z(V,W)= \displaystyle \frac{\det^{1/4}(I+4 \gamma {\Sigma}^V)\det^{1/4}(I+4 \gamma {\Sigma}^W)}{\det^{1/2}(I+2 \gamma ({\Sigma}^V+{\Sigma}^W))}.
\end{array}
$$

%%%%%%%%%%%%%%%%%%%%%%%%%%%%
\section{Experiments} \label{se:experiments}

We evaluated \our{} in binary classification experiments using SVM and compared the results with methods that work on incomplete data. We used examples retrieved from UCI repository combined with different strategies for attributes removal.

\subsection{Experimental setting}

%generowanie danych
We used eight UCI datasets \cite{Asuncion+Newman:2007}, which are summarized in Table \ref{tab:uci-data}. For each one, we considered three strategies for creating missing entries, each one realizing different missing data assumption:
\begin{itemize}
\item {\bf MCAR.} We randomly removed a fixed percentage of features, $p \in \{10\%, 20\%, \ldots, 90\%\}$. 
\item {\bf MAR.} We defined a structural process for attributes removal, where the selection of missing entries were fully accounted by visible features. We drawn $N$ points $x_1,\ldots,x_N$ of a dataset $X \subset \R^N$. Then, for every $x \in X$, where $x \neq x_i$ for $i=1,\ldots,N$, we removed its $i$-th attribute with a probability 
$$
\exp(-t \|x - x_i\|_{\Sigma})),
$$
where $\Sigma$ is a sample covariance matrix taken from data and $t>0$ is fixed. In other words, $i$-th point determined the removal of $i$-th feature. The value of $t$ was fixed so that to remove approximately $10\%, 20\%, \ldots, 90\%$. 
\item {\bf NMAR.} We modified previous scenario in the following way. The set of features was randomly divided into two equally-sized parts: visible features $I_V$ and hidden features $I_H$. Given $N$ randomly selected points $x_1,\ldots,x_N$ of $X$, we removed attribute $i \in I_V$ of $x \in X$ with a probability 
$$
\exp(-t\|x^{I_H} - x_i^{I_H}\|_{\Sigma})),
$$ 
where $x^{I_H}$ denotes the restriction of $x$ to coordinates from $I_H$ (as before $t > 0$ controlled the number of removed features). After that, data were represented only by features from $I_V$, while coordinates included in $I_H$ were discarded. In other words, attributes $I_H$ were used to define a removal process, which depends on unobservable features.
\end{itemize}
The missing entries appeared in both train and test sets.

For a comparison, we used two imputation techniques as baseline, multiple imputation strategy and two state-of-the-art methods developed for SVM:
\begin{enumerate}
\item  {\bf mean}: Missing coordinates were filled with average values taken over training set.
\item {\bf zero}: Absent attributes were set to zeros. 
\item  {\bf mice}: Unknown features were filled based on a train set using Multiple Imputation by Chained Equation \cite{azur2011multiple} implemented in R package mice\footnote{\url{https://cran.r-project.org/web/packages/mice/index.html}} \cite{buuren2011micemice}, where several imputations are drawing from the conditional distribution of data by Markov chain Monte Carlo techniques. 
\item {\bf geom}: Geometric margin is a modified SVM classifier proposed by G. Chechik et. al. \cite{chechik2008max}, where no assumption about missing data mechanism is required. In this approach, an objective function is based on the geometric interpretation of the margin and aims to maximize the margin of each sample in its own relevant subspace.
\item {\bf karma}: It is an algorithm for kernel classification proposed by E. Hazan et. al. \cite{hazan2015classification}, where the linear classifier is iteratively tuned.
\end{enumerate}
To estimate a Gaussian density from incomplete data used in {\our}, we applied R package {\bf norm}\footnote{\url{https://cran.r-project.org/web/packages/norm/index.html}} on train set only (estimation stage did not have the access to test/validation set). 

Each method was combined with SVM classifier using RBF kernel and tested in double 5-fold cross validation procedure. That is, for every division into train and test sets, the required hyperparameters were tuned using inner 5-fold cross validation applied on train set. The combination of parameters maximizing mean accuracy score (on validation set) was used to learn a final classifier on a entire train set, while the performance was evaluated on a test set that was not used during training. The accuracy was averaged over all 5 trails. Additionally, to reduce the effect from random deletion of attributes, we generated 10 different samples of incomplete data and averaged final accuracy scores.

After normalization of data, a grid search was applied to find optimal values of hyperparameters. We inspected the following ranges for margin parameter $C \in \{2^k: k = -5,-3,\ldots,9\}$ and kernel radius $\gamma \in \{2^k: k=-5,-3,\ldots,15\}$. Since {\bf karma} loss is additionally parametrized by a parameter $\gamma_{karma}$, we considered\footnote{Such a small range was chosen because of relatively high computational complexity of the algorithm.} $\gamma_{karma} \in \{1,2\}$.

\subsection{Results}

First of all, we noted that the difference between the results in MCAR and MAR scenarios is slight, which might follow from the fact that in both cases the removal process was based on visible features, see Figures \ref{fig:uciMCAR} and \ref{fig:uciMAR}. This behavior changed in NMAR situation, Figure \ref{fig:uciMNAR}, where on one hand removing mechanism was more complex, but on the other hand data were represented by lower number of features (half of features were hidden). In consequence, all methods obtained worse prediction rate. In particular, for Liver disorders and Crashes no method was able to produce useful results when at least 20\% of attributes were missing (accuracy coincides with the classes ratio).

Visual inspection of the Figures suggests that {\our}, {\bf karma} and {\bf mice} gave similar results and were in general better than the other methods. It is not surprising that multiple imputation strategy performs better than simpler techniques, like zero or mean imputations. The same holds for {\bf karma} algorithm, which was recently claimed to obtain state-of-the-art performance. Low quality results produced by {\bf geom} algorithm might follow from the fact that this method ignores missing attributes and is only based on observed features, which could be beneficial for very complex removal processes.

To further analyze the results, we ranked the methods over all data sets and all missing data scenarios; the best performing algorithm got the rank of 1, the second best rank 2 etc. The results presented in Figure \ref{fig:rank} show that {\our} is best suited to the case when a lot of features are absent. Although the performance of {\bf mice} is better for 10-30\% of missing attributes, its results deteriorate heavily as the number of missing entries increases. It is worth to notice that the rank of {\bf karma} is very stable. Almost always, it was the second best approach.

We also verified the results applying statistical tests, see \cite{demvsar2006statistical}, specifically we used the Friedman test with Nemenyi post hoc analysis. Given a ranking of the methods (aggregated additionally over percentage of missing coordinates), the analysis consists of two steps: 
\begin{itemize}
\item the null hypothesis is made that all methods perform the same and the observed differences are merely random (the hypothesis is tested by the Friedman test, which follows a $\chi^2$ distribution)
\item having rejected the null hypothesis the differences in ranks are analyzed by the Nemenyi test.
\end{itemize}

Figure \ref{fig:stat} visualizes the results for a significance level of $p=0.05$. The x-axis shows the mean rank over combinations of data set and percentage of missing values for each method. Groups of methods for which the difference in mean rank is not statistically significant are connected by horizontal bars. As can be observed, the mean rank of {\our} is better than the others. This advantage is statistically significant comparing it with all methods except {\bf karma}. Nevertheless, the use of our method is much simpler, since it relies on applying classical SVM with slightly modified kernel function, whereas {\bf karma} uses an iterative algorithm to increase the performance of the classifier and requires the selection of one more hyperparameter.

\section{Conclusion}

We proposed \our{}, the generalization of RBF kernel to the case of incomplete data. This method uses the information contained in data distribution to model the uncertainty on absent attributes without performing any direct imputations. The experimental results show that \our{} outperforms imputation-based techniques and obtains slightly better results than recent state-of-the-art algorithm. Moreover, it does not require the modification of existing machine learning methods, which makes it easy to use in practice.

\newpage

\bibliography{./ref}
\bibliographystyle{icml2017}

\end{document}